\def\eqref#1{equation~\ref{#1}}
\def\1{\bm{1}}
\def\vx{{\bm{x}}}
\def\vz{{\bm{z}}}
\DeclareMathAlphabet{\mathsfit}{\encodingdefault}{\sfdefault}{m}{sl}
\SetMathAlphabet{\mathsfit}{bold}{\encodingdefault}{\sfdefault}{bx}{n}
\DeclareMathOperator*{\argmax}{arg\,max}
\newcommand{\eg}{e.g., }
\newcommand{\ie}{i.e., }
\newcommand{\nth}[1]{$#1^\text{th}$}
\DeclareAcronym{cli} {
    short = CLI,
    long = Command Line Interface,
    class = abbrev
}
\definecolor{ckeyword}{HTML}{7F0055}
\definecolor{ccomment}{HTML}{3F7F5F}
\definecolor{cstring}{HTML}{2A0099}
\lstdefinestyle{numbers}{
	numbers=left,
	framexleftmargin=20pt,
	numberstyle=\tiny,
	firstnumber=auto,
	numbersep=1em,
	xleftmargin=2em
}
\lstdefinestyle{layout}{
	frame=none,
	captionpos=b,
}
\lstdefinestyle{comment-style}{
	morecomment=[l]//,
	morecomment=[s]{/*}{*/},
	commentstyle={\color{ccomment}\itshape},
}
\lstdefinestyle{string-style}{
	morestring=[b]",%
	morestring=[b]',%
	stringstyle={\color{cstring}},
	showstringspaces=false,%
}
\lstdefinestyle{keyword-style}{
	keywordstyle={\ttfamily\bfseries},
	morekeywords={
		function,
		constructor,
		int,
		bool,
		return,
		returns,
		uint
	},
	morekeywords = [2]{},
	keywordstyle = [2]{\text},
	sensitive=true,
}
\lstdefinestyle{input-encoding}{
	inputencoding=utf8,
	extendedchars=true,
	literate=
	{ℝ}{$\reals$}1%
	{→}{$\rightarrow$}1%
	{α}{$\alpha$}1%
	{β}{$\beta$}1%
	{λ}{$\lambda$}1%
	{θ}{$\theta$}1%
	{ϕ}{$\phi$}1%
}
\lstdefinestyle{escaping}{
	moredelim={**[is][\color{blue}]{\%}{\%}},
	escapechar=|,
	mathescape=true
}
\lstdefinestyle{default-style}{
	basicstyle=\fontencoding{T1}\ttfamily\footnotesize,
	style=numbers,
	style=layout,
	style=comment-style,
	style=string-style,
	style=keyword-style,
	style=input-encoding,
	style=escaping,
	tabsize=2,
	upquote=true
}
\lstdefinelanguage{BASIC}{
	language=C++,
	style=default-style
}[keywords,comments,strings]%
\theoremstyle{plain} %
\newtheorem{theorem}{Theorem}[section]
\newtheorem*{theorem*}{Theorem}
\newtheorem{lemma}{Lemma}
\newtheorem*{lemma*}{Lemma}
\newtheorem*{corollary*}{Corollary}
\theoremstyle{definition}
\def\veps{{\mathbf{\epsilon}}}
\newenvironment{psmallmatrix}
{\left(\begin{smallmatrix}}
    {\end{smallmatrix}\right)}
\newcommand{\prob}[0]{\mathbb{P}} %
\newcommand{\xRightarrow}[2][]{\ext@arrow 0359\Rightarrowfill@{#1}{#2}}
\newcommand{\methodHeuristic}{\textsc{BaseSPT}\xspace}
\newcommand{\methodGlobalD}{\textsc{DistSPT}$^{\mathcal{D}}$\xspace}
\newcommand{\methodGlobalx}{\textsc{DistSPT}$^{\vx}$\xspace}
\newcommand{\methodGlobal}{\textsc{DistSPT}\xspace}
\newcommand{\methodIndividual}{\textsc{IndivSPT}\xspace}
\newcommand{\cmark}{\ding{51}\xspace}%
\newcommand{\xmark}{\ding{55}\xspace}%
\newcommand{\crefrangeconjunction}{--}
\crefname{listing}{Lst.}{listings}
\crefname{line}{Lin.}{Lin.}
\crefname{appendix}{App.}{App.}
\newcommand{\app}[1]{%
	\ifbool{includeappendix}{\cref{#1}}{the appendix}%
}
\newcommand{\App}[1]{%
	\ifbool{includeappendix}{\cref{#1}}{The appendix}%
}
\crefname{algocf}{Algorithm}{Algorithms}
\title{Certified Defense to Image Transformations via Randomized Smoothing}
\author{
	Marc Fischer, Maximilian Baader, Martin Vechev \\
	Department of Computer Science\\
	ETH Zurich\\
	\texttt{\{marc.fischer, mbaader, martin.vechev\}@inf.ethz.ch} \\
}
\begin{document}

\maketitle

\begin{abstract}
  We extend randomized smoothing to cover parameterized transformations (e.g., rotations, translations) and certify robustness in the parameter space (e.g., rotation angle). This is particularly challenging as interpolation and rounding effects mean that image transformations do not compose, in turn preventing direct certification of the perturbed image (unlike certification with $\ell^p$ norms). We address this challenge by introducing three different kinds of defenses, each with a different guarantee (heuristic, distributional and individual) stemming from the method used to bound the interpolation error. Importantly, we show  how individual certificates can be obtained via either statistical error bounds or efficient online inverse computation of the image transformation. We provide an implementation of all methods at \url{https://github.com/eth-sri/transformation-smoothing}.

\end{abstract}

\section{Introduction}

Deep neural networks are vulnerable to adversarial examples \cite{SzegedyAdvExamples} -- small changes that preserve semantics (\eg $\ell^p$-noise or geometric transformations such as rotations) \citep{EngstromMadryRotation}, but can affect the output of a network in undesirable ways. As a result, there has been substantial recent interest in methods which aim to ensure the network is certifiably robust to adversarial examples \cite{AI2,DiffAI,WongK18,RaghunathanSL18a,CohenRK19,Salman, PeiCYJ17,  DeepPoly, DeepG, MohapatraGeometric, LinyiLiSmoothing}.

\paragraph{Certification guarantees}
There are two principal robustness guarantees a certified defense can provide at inference time:
(i) the (standard) distributional guarantee, where a  robustness score is
computed offline on the test set to be interpreted in expectation for
images drawn from the data distribution, and (ii) an individual guarantee, where a certificate is
computed online for the (possibly perturbed) input. The choice of guarantee depends on the application and regulatory constraints.

\paragraph{Guarantees with $\ell^{p}$ norms}
When considering $\ell^p$ norms, existing certification methods can be directly used to obtain either of the above two guarantees:
for an image $\vx$ and adversarial noise $\delta$, $\|\delta\|_p < r$, proving that a classifier $f$ is $r$-robust around  $\vx':=\vx+\delta$ is enough to guarantee $f(\vx) = f(\vx')$.
That is, it suffices to prove robustness of a perturbed input in order to certify that the perturbation did not change the classification, as the $r$-ball around $\vx'$ includes $\vx$.

\paragraph{Key challenge: guarantees for geometric perturbations}
Perhaps not intuitively, however, for more complex perturbations such as geometric transformations, proving robustness around an image $\vx'$ via existing methods (e.g., \cite{PeiCYJ17,DeepPoly,DeepG,MohapatraGeometric}) does not imply that
$f(\vx) = f(\vx')$ for the original image $\vx$. To illustrate this issue, consider the rotation $R_\gamma$, by angle $\gamma$ of an image $\vx$, followed by an interpolation $I$. Certifying that the classification of the rotated image $\vx' := I \circ R_\gamma(\vx)$ for $\|\gamma\|<r$ is robust under further rotations $I \circ R_\beta$ for $\|\beta\|<r$ is not sufficient to imply that $\vx$ and $\vx'$ classify the same, as rotating $\vx'$ back by $\beta = -\gamma$ does not return the original image $\vx$ due to interpolation.
A central challenge then is to develop techniques that are able to handle more involved perturbations.

\paragraph{This work: certification beyond $\ell^{p}$ norms}
\begin{wrapfigure}[15]{r}{0.5\textwidth}
  \footnotesize
  \centering
  \captionof{table}{Certificates obtained by different methods. $^{*}$ indicates deterministic certification, other methods hold with high confidence.} \label{tab:overview}
  \begin{tabular}[t]{@{}lcc@{}}
    \toprule
    & dist. & indiv.\\
    \midrule
    \multicolumn{3}{l}{Composable perturbation $\psi$ (e.g., additive $\ell^{p}$-bound)}\\
    \midrule
    relaxation-based$^{*}$ \cite{AI2,DiffAI,WongK18,RaghunathanSL18a}
    & \cmark & \cmark \\
    \citet{CohenRK19}
    & \cmark & \cmark \\
    \midrule
    \multicolumn{3}{l}{Non-composable $\psi$ (e.g., rotation $I \circ R$)}\\
    \midrule
    \methodIndividual & (\cmark) & \cmark \\
    \methodGlobalD & \cmark w.p. $q_{E}$ & \cmark w.p. $q_{E}$\\
    \methodGlobalx & \cmark  & \xmark \\
    relaxation-based$^{*}$  \cite{PeiCYJ17, DeepPoly, DeepG, MohapatraGeometric} & \cmark & \xmark\\
    RS-based \cite{LinyiLiSmoothing,TSS} & \cmark & \xmark \\
\bottomrule
  \end{tabular}
\end{wrapfigure}
In this work we address this challenge and introduce the first certification methods for geometric transformations based on randomized smoothing (RS): we extend RS \cite{CohenRK19} to handle transformations (SPT) by adding (Gaussian) noise to transformation parameters, enabling us to handle large models and datasets (\eg ImageNet). Our methods, their guarantees and how they compare to standard RS  \cite{CohenRK19} (on $\ell^{p}$ norms) and other techniques, are shown in \cref{tab:overview}.

\paragraph{\methodHeuristic} As with standard RS over $\ell^{p}$ norms,  SPT (not listed) provides individual and distributional guarantees, but only for composable parametric transformations, that is, where: $\psi_\gamma$: $\psi_{\beta + \gamma} = \psi_\gamma \circ \psi_\beta$. For non-composable ones (e.g., rotations with interpolation), \methodHeuristic is only a heuristic defense, motivating the need for appropriate certification methods.

\paragraph{\methodIndividual} This method provides the strongest guarantees for non-composable transformations and works as follows: at inference time, for each input $\vx'$, it calculates an individual upper bound of the expression $\epsilon$ \emph{without access} to (the original) $\vx$, then combined with SPT and smoothing. A key step here is computing the inverse $\psi_\gamma^{-1}(\vx')$ of $\vx'$, for which we introduce an efficient technique.

\paragraph{\methodGlobal} While desirable (it mimics original RS guarantees), \methodIndividual can be expensive to apply at inference time and obtain tight certificates with. This motivates the study of more relaxed, still useful certification guarantees, as well as corresponding methods which achieve tighter bounds using these definitions. The idea of \methodGlobal is to estimate a probabilistic upper bound for the expression $\epsilon = \|\psi_\beta \circ \psi_\gamma(\vx)  - \psi_{\beta + \gamma}(\vx)\|_2$, combined with SPT and RS. The first variant here is \methodGlobalD, where this upper bound is estimated offline on the training dataset and holds for all $\vx$ from the data distribution $\mathcal{D}$, with probability $q_{E}$. This method enjoys both probabilistic distributional and individual guarantees. The weakening of the definition used by \methodIndividual (now probabilistic over $q_E$) enables the method to compute tighter bounds. The second variant, \methodGlobalx, provides weaker guarantees than \methodGlobalD, with the provided bound now computed for individual $\vx$ on the test set. It obtains a distributional guarantee, however, it does not provide individual guarantees -- this restriction allows \methodGlobalx to compute even tighter bounds. We remark that recent methods targeting robustness to geometric transformations (e.g., \cite{DeepG, LinyiLiSmoothing,TSS} also fall in this class.

To summarize, our core contributions are:
\begin{itemize}
  \item A generalization of randomized smoothing to parameterized transformations.
  \item A number of novel certification methods for non-composable parameterized transformations, systematically exploring both distributional and individual guarantees while considering deterministic and probabilistic bounds. In the process, we highlight the rich interplay between certification definitions and tightness of the corresponding certificates.
\item A thorough evaluation of all methods on common image datasets, showcasing certified robustness to $\pm 30^{\circ}$ rotations for $50\%$ of inputs on Restricted ImageNet.
\end{itemize}

\section{Related Work}

We now survey the most closely related work in neural network certification and defenses.

\paragraph{$\ell^p$ norm based certification and defenses}
The discovery of adversarial examples \citep{SzegedyAdvExamples,
BiggioAdvExamples} triggered interest in training and certifying robust neural
networks.
An attempt to improve model robustness are empirical defenses
\cite{CaoRegionbased, LiuSmoothing}, strategies which harden a model against an
adversary.
While this may improve robustness to current adversaries, typically robustness
cannot be formally verified with current certification methods. This is
because complete methods \citep{Ehlers17planet, Reluplex, bunel18nips} do not
scale and incomplete methods relying on over approximation lose too much
precision \cite{AI2, WangSafety, Weng2018,
RaghunathanSL18a, DeepPoly, SalmanBarrier}, even for networks trained
to be amenable to certification.
Recently, randomized smoothing was introduced, which could for the first time,
certify a (smoothed) classifier against norm bound $\ell^2$ noise on ImageNet
\cite{Lecuyer2018CertifiedRT, LiSampling, CohenRK19, Salman, Macer}, by relaxing exact
certificates to high confidence probabilistic ones.
Smoothing scales to large models, however, it is currently limited to norm-based perturbations.

\paragraph{Semantic perturbations}
Transformations such as translations and rotations can produce adversarial examples \cite{EngstromMadryRotation, KanbakMF18}.
An enumerative approach certifying against semantic perturbations was presented
in \cite{PeiCYJ17}. There, the search space is reduced by only consider next
neighbor interpolation. Unfortunately, for more elaborate interpolations (e.g., bilinear), the approach becomes infeasible. The first
certification against rotations with bilinear interpolations was carried out in \cite{DeepPoly}, later significantly improved on by \cite{DeepG}. Both
methods generate linear relaxations and propagate them through the network. However, the methods do not yet scale to large networks (\ie ResNet-50) or
complex data sets (\ie ImageNet). The approaches of \cite{MohapatraGeometric} and \cite{DeepPoly} are similar for rotation. \cite{LinyiLiSmoothing,TSS} reduce transformations to multiple $\ell_{2}$-balls which they certify via RS so to obtain a certificate for the overall transformation. As outlined in \cref{tab:overview}, all these methods result in a distributional but not an individual guarantee.

\section{Generalization of Smoothing} \label{sec:generalization}

A smoothed classifier $g \colon \mathbb{R}^m \mapsto \mathcal{Y}$ can be
constructed out of an ordinary classifier $f \colon \mathbb{R}^m \mapsto
\mathcal{Y}$, by calculating the most probable result of $f(\vx + \epsilon)$ where
$\epsilon \sim \mathcal{N}(0, \sigma^2 \mathbb{1})$:
\begin{equation*}
    g(\vx) := \argmax_c \prob_{\epsilon \sim \mathcal{N}
        (0, \sigma^2 \mathbb{1})}(f(\vx + \epsilon)=c).
\end{equation*}
One then obtains the following robustness guarantee:
\begin{theorem}[From \cite{CohenRK19}] \label{thm:original}
    Suppose $c_A \in \mathcal{Y}$, $\underline{p_A}, \overline{p_B} \in [0,1]$. If
    \begin{equation*}
        \prob_{\epsilon}(f(\vx + \epsilon)=c_A)
        \geq
        \underline{p_A}
        \geq
        \overline{p_B}
        \geq
        \max_{c \neq c_A}\prob_{\epsilon}(f(\vx + \epsilon)=c),
    \end{equation*}
    then $g(\vx + \delta) = c_A$ for all $\delta$ satisfying $\|\delta\|_2 \leq
    \tfrac{\sigma}{2}(\Phi^{-1}(\underline{p_A}) - \Phi^{-1}(\overline{p_B})) =:
    r_{\delta}$.
\end{theorem}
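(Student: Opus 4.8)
The plan is to reduce the claim to two one-dimensional Gaussian estimates via the Neyman--Pearson lemma. Fix a perturbation $\delta$ with $\|\delta\|_2 \leq r_\delta$; by the definition of $g$ it suffices to show that $\prob_\epsilon(f(\vx+\delta+\epsilon)=c_A) > \prob_\epsilon(f(\vx+\delta+\epsilon)=c)$ for every $c \neq c_A$. Introduce the two Gaussian measures $\mu_X = \mathcal{N}(\vx,\sigma^2\mathbb{1})$ and $\mu_Y = \mathcal{N}(\vx+\delta,\sigma^2\mathbb{1})$, and write $A := \{z : f(z)=c_A\}$ and $B_c := \{z : f(z)=c\}$. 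The hypotheses then read $\mu_X(A)\geq\underline{p_A}$ and $\mu_X(B_c)\leq\overline{p_B}$, and the goal becomes $\mu_Y(A) > \mu_Y(B_c)$ for all $c \neq c_A$.

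First I would lower-bound $\mu_Y(A)$. Among all measurable sets $S$ with $\mu_X(S)\geq\underline{p_A}$, the Neyman--Pearson lemma shows that $\inf_S \mu_Y(S)$ is attained at a sub-level set of the likelihood ratio $\mathrm{d}\mu_Y/\mathrm{d}\mu_X$. Since $\mu_X$ and $\mu_Y$ are isotropic Gaussians differing only by the mean shift $\delta$, one computes $\mathrm{d}\mu_Y/\mathrm{d}\mu_X(z) = \exp\big((2\delta^{\top}(z-\vx)-\|\delta\|_2^2)/(2\sigma^2)\big)$, which is monotone in $\delta^{\top} z$; hence the extremal $S$ can be taken to be a half-space $\{z : \delta^{\top} z \leq t\}$. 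The rest is a one-dimensional computation: $\delta^{\top} z$ has law $\mathcal{N}(\delta^{\top}\vx,\sigma^2\|\delta\|_2^2)$ under $\mu_X$ and $\mathcal{N}(\delta^{\top}\vx+\|\delta\|_2^2,\sigma^2\|\delta\|_2^2)$ under $\mu_Y$, so choosing $t$ to make $\mu_X(S)=\underline{p_A}$ gives $t = \delta^{\top}\vx + \sigma\|\delta\|_2\,\Phi^{-1}(\underline{p_A})$ and therefore $\mu_Y(S) = \Phi\big(\Phi^{-1}(\underline{p_A}) - \|\delta\|_2/\sigma\big)$. This yields $\mu_Y(A) \geq \Phi\big(\Phi^{-1}(\underline{p_A}) - \|\delta\|_2/\sigma\big)$. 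Running the symmetric argument with sets of $\mu_X$-measure at most $\overline{p_B}$ (the extremal set now a half-space in the opposite direction) gives $\mu_Y(B_c) \leq \Phi\big(\Phi^{-1}(\overline{p_B}) + \|\delta\|_2/\sigma\big)$ for each $c\neq c_A$, using $\Phi^{-1}(1-p)=-\Phi^{-1}(p)$.

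To conclude, since $\Phi$ is strictly increasing, $\mu_Y(A) > \mu_Y(B_c)$ holds whenever $\Phi^{-1}(\underline{p_A}) - \|\delta\|_2/\sigma > \Phi^{-1}(\overline{p_B}) + \|\delta\|_2/\sigma$, i.e. whenever $\|\delta\|_2 < \tfrac{\sigma}{2}(\Phi^{-1}(\underline{p_A}) - \Phi^{-1}(\overline{p_B})) = r_\delta$; at the boundary $\|\delta\|_2 = r_\delta$ both bounds coincide, so $c_A$ still attains the maximum and the conclusion follows under the tie-breaking convention implicit in $\argmax$. The degenerate cases $\underline{p_A}\in\{0,1\}$, where $\Phi^{-1}$ takes the values $\pm\infty$, are either vacuous or immediate.

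I expect the main obstacle to be the careful invocation of Neyman--Pearson in exactly the form needed here: showing that the set minimizing $\mu_Y(S)$ subject to $\mu_X(S)\geq\underline{p_A}$ may be taken to be a half-space $\{z:\delta^{\top}z\leq t\}$ of $\mu_X$-measure exactly $\underline{p_A}$ (the reduction from the inequality constraint to equality, plus the monotone likelihood ratio argument), together with the analogous statement for the maximization problem. Once this half-space characterization is in place, the remaining steps — the standard-normal CDF manipulations, the class-by-class comparison of the two bounds, and the measure-zero boundary of the half-space — are routine.
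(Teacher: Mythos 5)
Your proposal is correct and follows essentially the same route as the paper: the paper's appendix proof of its generalization (Theorem \ref{thm:bound} via Theorem \ref{thm:aux}) uses exactly this likelihood-ratio/half-space argument, defining the sets $A$ and $B$ as half-spaces of the form $\{\vz \mid \gamma^{T}\Sigma^{-1}\vz \lessgtr t\}$ and reducing to one-dimensional Gaussian CDF computations, with Lemma \ref{lem:app:1} playing the role of your Neyman--Pearson step. The only cosmetic difference is that you invoke Neyman--Pearson as a known lemma while the paper proves the needed likelihood-ratio comparison explicitly (and works with a general covariance $\Sigma$, recovering the isotropic case $\Sigma=\sigma^{2}\mathbb{1}$).
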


We now generalize this theorem to parameterized transformations. Consider the composable
transformations $\psi_\beta: \mathbb{R}^m \to \mathbb{R}^m$, satisfying
$\psi_\beta \circ \psi_\gamma = \psi_{\beta + \gamma}$ for all $\beta, \gamma \in
\mathbb{R}^d$. Then we can define a smoothed classifier $g: \mathbb{R}^m \to
\mathcal{Y}$ analogously for a parametric transformation $\psi_\beta$ by
\begin{equation}
    \label{eq:g}
  g(\vx) = \argmax_c
    \prob_{\beta~\sim~\mathcal{N}(0, \sigma^2 \mathbb{1})}
    \left(
      f \circ \psi_\beta(\vx) = c
    \right).
\end{equation}
With that, we obtain the following robustness guarantee:
\begin{theorem} \label{thm:bound}
  Let $\vx \in \mathbb{R}^m$, $f: \mathbb{R}^m \to \mathcal{Y}$ be a classifier
  and $\psi_\beta: \mathbb{R}^m \to \mathbb{R}^m$ be a composable transformation
  as above. If
  \begin{equation*}
    \prob_\beta(f \circ \psi_\beta(\vx) = c_{A})
    \geq
    \underline{p_{A}}
    \geq
    \overline{p_{B}}
    \geq
    \max_{c_{B} \neq c_{A}} \prob_\beta(f \circ \psi_\beta(\vx) = c_{B}),
  \end{equation*}
  then $g \circ \psi_\gamma(\vx) = c_A$ for all $\gamma$ satisfying
  $ \|\gamma\|_2 \leq \tfrac{\sigma}{2}(\Phi^{-1}(\underline{p_{A}}) -
  \Phi^{-1}(\overline{p_{B}})) =: r_\gamma.  $ Further, if $g$ is
  evaluated on a proxy classifier $f'$ that behaves like $f$ with
  probability $1-\rho$ and else returns an arbitrary answer, then
  $r_\gamma := \tfrac{\sigma}{2}(\Phi^{-1}(\underline{p_{A}}-\rho) -
  \Phi^{-1}(\overline{p_{B}}+\rho))$.
\end{theorem}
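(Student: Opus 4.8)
The plan is to reduce Theorem~\ref{thm:bound} to Theorem~\ref{thm:original} by lifting the whole problem to the parameter space $\mathbb{R}^d$. Fix $\vx \in \mathbb{R}^m$ and define an auxiliary classifier $\tilde f \colon \mathbb{R}^d \to \mathcal{Y}$ by $\tilde f(\eta) := f(\psi_\eta(\vx))$; once $\vx$ is frozen this is an ordinary (measurable) classifier on $\mathbb{R}^d$, and its Cohen--Kolter smoothing is $\tilde g(\eta) := \argmax_c \prob_{\beta \sim \mathcal{N}(0, \sigma^2 \mathbb{1})}(\tilde f(\eta + \beta) = c)$. The key algebraic observation is that composability turns translations in parameter space into compositions of transformations: for any $\gamma \in \mathbb{R}^d$, $\tilde f(\beta + \gamma) = f(\psi_{\beta + \gamma}(\vx)) = f(\psi_\beta(\psi_\gamma(\vx))) = (f \circ \psi_\beta)(\psi_\gamma(\vx))$. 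Hence $\tilde g(\gamma) = \argmax_c \prob_\beta\big((f \circ \psi_\beta)(\psi_\gamma(\vx)) = c\big) = g(\psi_\gamma(\vx)) = g \circ \psi_\gamma(\vx)$, and in particular $\tilde g(\vzero) = g(\vx)$.

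With this dictionary in place, the hypothesis of Theorem~\ref{thm:bound} reads $\prob_\beta(\tilde f(\beta) = c_A) \geq \underline{p_A} \geq \overline{p_B} \geq \max_{c_B \neq c_A} \prob_\beta(\tilde f(\beta) = c_B)$, which is exactly the hypothesis of Theorem~\ref{thm:original} applied to the classifier $\tilde f$ at the point $\vzero \in \mathbb{R}^d$. Theorem~\ref{thm:original} then yields $\tilde g(\vzero + \gamma) = c_A$ for every $\gamma$ with $\|\gamma\|_2 \leq \tfrac{\sigma}{2}(\Phi^{-1}(\underline{p_A}) - \Phi^{-1}(\overline{p_B})) = r_\gamma$. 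Translating back through the identity above, this says precisely $g \circ \psi_\gamma(\vx) = c_A$ on the same ball, proving the first claim.

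For the proxy statement, suppose $g$ is instead evaluated on $f'$, which uses internal randomness independent of $\beta$ and agrees with $f$ on any fixed input with probability at least $1 - \rho$. A union bound over this independent coin gives, for the top class, $\prob(f' \circ \psi_\beta(\vx) = c_A) \geq \prob(f \circ \psi_\beta(\vx) = c_A) - \rho \geq \underline{p_A} - \rho$, and for every $c_B \neq c_A$, $\prob(f' \circ \psi_\beta(\vx) = c_B) \leq \prob(f \circ \psi_\beta(\vx) = c_B) + \rho \leq \overline{p_B} + \rho$. Applying the already-proven first part with the weakened bounds $\underline{p_A} - \rho$ and $\overline{p_B} + \rho$ in place of $\underline{p_A}$ and $\overline{p_B}$ produces the stated radius $r_\gamma := \tfrac{\sigma}{2}(\Phi^{-1}(\underline{p_A} - \rho) - \Phi^{-1}(\overline{p_B} + \rho))$.

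The only real obstacle here is bookkeeping rather than mathematics: one must check that the lifted map $\tilde f$ is measurable so that the smoothed probabilities are well defined, and, in the proxy case, be explicit that the randomness of $f'$ is independent of the smoothing noise $\beta$ so that the additive $\rho$ slack is legitimate (conditioning on $\beta$ and taking expectations). Everything else is a direct appeal to Theorem~\ref{thm:original}.
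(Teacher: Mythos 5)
For the first claim your route is correct but genuinely different from the paper's. You reduce \cref{thm:bound} to \cref{thm:original} by lifting everything to parameter space: with $\vx$ frozen, $\tilde f(\eta) := f(\psi_\eta(\vx))$ is an ordinary classifier on $\mathbb{R}^d$, composability gives $\tilde f(\beta+\gamma) = (f\circ\psi_\beta)(\psi_\gamma(\vx))$, hence $\tilde g(\gamma) = g\circ\psi_\gamma(\vx)$, and Cohen--Kolter applied at the origin (with dimension $d$ in place of $m$, which is harmless since the theorem is dimension-agnostic) yields the stated radius. The paper does not take this shortcut: it re-proves the bound from scratch in \cref{proof}, where \cref{thm:aux} establishes an anisotropic version with arbitrary positive-definite covariance $\Sigma$ and radius in the Mahalanobis norm $\sqrt{\gamma^T\Sigma^{-1}\gamma}$, using the likelihood-ratio sets $A$, $B$ and \cref{lem:app:1}, and then specializes $\Sigma=\sigma^2\mathbb{1}$. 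Your argument is shorter and leverages the known result; the paper's buys a strictly more general statement (non-isotropic smoothing of the parameters) at the cost of redoing the Neyman--Pearson-style computation. Modulo the measurability bookkeeping you flag, this part is fine.

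For the proxy clause there is a genuine, though fixable, mismatch. In the paper (\cref{lemma:bound_prob}, and the way the clause is used in \cref{sec:approach} and \cref{sec:inst-prat}) the bounds are the quantities one can actually measure, i.e.\ bounds on the output probabilities of the \emph{proxy} $f'$; the union bound converts them into bounds $\underline{p_{A}}-\rho$ and $\overline{p_{B}}+\rho$ on the probabilities of the deterministic $f$, and part one is then applied to $f$, giving a guarantee about the true smoothed classifier $g$. You argue in the opposite direction: you assume the bounds hold for $f$, deduce bounds for $f'$, and then ``apply the already-proven first part with the weakened bounds'' --- but to which classifier? The object you end up certifying is the smoothing of the randomized $f'$, and part one (whether via \cref{thm:original} or \cref{thm:aux}) was proven only for a deterministic base classifier; applying it to a randomized $f'$ requires the (standard, but nowhere established here) extension of the smoothing theorem to randomized classifiers, which your appeal to part one does not supply. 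Moreover, a guarantee about the proxy-smoothed classifier under assumed knowledge of $f$'s probabilities is not what the construction needs, since $f$ (e.g.\ $h_E \circ I$ with an exact robustness certificate) cannot be evaluated in practice. The repair is to flip the union bound as in \cref{lemma:bound_prob}: read the bounds as measured on $f'$, subtract/add $\rho$ to obtain valid bounds for $f$, and invoke your part one for $f$; this yields exactly the stated radius while staying within the deterministic setting you already proved.
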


The proof is similar to the one presented in \citet{CohenRK19} and is given in
\cref{proof}. The key difference is that we allow
parameterized transformations $\psi$, while \citet{CohenRK19} only allows
additive noise.

\section{Certification with interpolation and rounding errors}\label{sec:approach} %

\begin{wrapfigure}{r}{.3\textwidth}
  \vspace{-0.5em}
  \centering
  \includegraphics[width=0.3\textwidth]{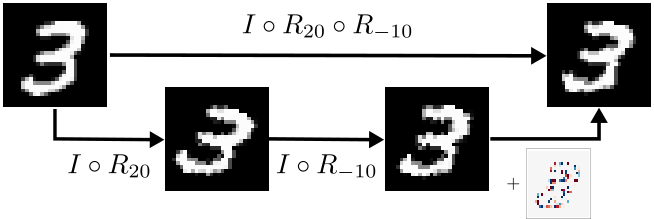}
  \caption{Rotations with interpolation do not compose.}
  \label{fig:interpolation}
  \vspace{-1.5em}
\end{wrapfigure}

We now instantiate \cref{thm:bound} for parameterized geometric image
transformations $T_\beta$, $\beta \in \mathbb{R}^d$, followed by interpolation $I$, denoted as $T_\beta^I$.
A geometric transformation $T_\beta$ is followed by an interpolation $I$ in order to express the result on the pixel grid.
In general, even if $T_{\beta}$ composes, $T_{\beta}^{I}$ does not (see \cref{fig:interpolation} in the case where $T_\beta$ is a rotation $R_\beta$ by an angle $\beta$).
This prevents us from directly instantiating \cref{thm:bound} with $\psi_{\beta} := T_{\beta}^{I}$.

To address this issue, we now show how to construct a classifier $g_E$ with the desired guarantees, namely that
$g_E \circ T^{I}_{\gamma}(\vx) = g_{E}(\vx)$ for $\gamma$ with $\|\gamma\|_{2} \leq r_{\gamma}$, thus enabling certification of image transformations (which may not compose).
Our proposed construction consists of two steps.

First, for a fixed but arbitrary $\vx$, let $h_{E}$ be a classifier satisfying interpolation invariance:
\begin{equation}
  h_{E} \circ T^{I}_{\beta} \circ T^{I}_{\gamma} (\vx) = h_{E} \circ
  T^{I}_{\beta + \gamma}(\vx)
  \;\;
  \forall \beta, \gamma \in \mathbb{R}^d.
  \label{eq:h_E:cond}
\end{equation}
We now instantiate \cref{thm:bound} with $f := h_{E} \circ I$ and $\psi_{\beta} := T_{\beta}$, obtaining
a smoothed classifier
$
  g_E(\vx) := \argmax_c
    \prob_{\beta~\sim~\mathcal{N}(0, \sigma^2 \mathbb{1})}
    \left(
      h_{E} \circ I \circ T_\beta(\vx) = c
    \right),
$
such that $g_E \circ T_{\gamma}(\vx) = c_{A} = g_E(\vx)$ for $\gamma$ with $\|\gamma\|_{2} \leq r_{\gamma}$ by \cref{thm:bound}. Further, since
\begin{align*}
  g_E \circ T_{\gamma}(\vx)
  &=
  \argmax_c
    \prob_{\beta~\sim~\mathcal{N}(0, \sigma^2 \mathbb{1})}
    \left(
      h_{E} \circ I \circ T_\beta \circ T_{\gamma} (\vx) = c
    \right)
  \\
  &=
  \argmax_c
    \prob_{\beta~\sim~\mathcal{N}(0, \sigma^2 \mathbb{1})}
    \left(
      h_{E} \circ T^{I}_\beta \circ T^{I}_{\gamma} (\vx) = c
    \right)
  \\
  &= g_E \circ T^{I}_{\gamma}(\vx),
  \end{align*}

where the first and last equities hold by the definition of $g_E$ and the second one due to \cref{eq:h_E:cond}. Thus, we obtain a classifier $g_E$ with the desired property.

Second, we discuss the construction of the desired $h_{E}$ (from step 1). Consider the interpolation error
\begin{align}
  &\veps(\beta, \gamma, \vx)
  := T_\beta^I \circ
  T_\gamma^I(\vx) - T_{\beta + \gamma}^I(\vx)   \label{eq:epsilon_error},\\
  &\text{bounded by $E \in \mathbb{R}^{\geq 0}$ s.t.}\;
  \forall \beta, \gamma \in \mathbb{R}^d. \; \|\veps(\beta, \gamma, \vx)\|_{2} \leq E \label{eq:condE}
\end{align}
for a given but arbitrary $\vx$. Thus if $h_{E}$ is $\ell^{2}$-robust
with radius $E$ around $T_{\beta + \gamma}^I(\vx)$, interpolation
invariance holds.
While many choices for such $h_{E}$ are possible in the rest of the paper we instantiate $h_{E}$ by applying \cref{thm:original} to a
base classifier $b$.

\paragraph{Obtaining probabilistic guarantees from  \cref{thm:bound}} So far we assumed that $\vx$ is arbitrary but fixed and constructed $E$ and $h_E$ for this $\vx$ specifically.
In general, finding a tight deterministic bound $E$ that holds $\forall \beta, \gamma$ is computationally challenging. Thus, we relax this deterministic guarantee into a probabilistic one:
\begin{equation}
  \mathbb{P}_{\beta \sim \mathcal{N}(0, \sigma^2\mathbb{1})}
  \left(
  \|\veps(\beta, \gamma, \vx)\|_{2} \leq E
  \right) \geq 1-\rho_E
  \;\;
  \forall \gamma \in \mathbb{R}^d.
  \label{eq:relaxed:condE}
\end{equation}
meaning \cref{eq:condE} holds with probability at least $1-\rho_E$, in turn implying that \cref{eq:h_E:cond} also holds at least with probability $1-\rho_E$.
This can also be formulated as having a proxy classifier $h'_{E}$ which behaves like $h_{E}$ with probability at least $1-\rho_E$ on the inputs specified by \cref{eq:h_E:cond}.
In practice, we construct $h'_{E}$ which behaves like $h_{E}$ with probability at least $1-\rho_E$ on all inputs, implying this behavior on the inputs from \cref{eq:h_E:cond}.
From $h'_{E}$, we then obtain $f' := h'_{E} \circ I$ which behaves like $f$ with probability at least $1-\rho_E$ on all inputs.
Then, we can apply \cref{thm:bound} by setting $\rho$ to $\rho_E$ and obtain the desired guarantee.
In \cref{sec:calc_error_bounds}, we show how to obtain $E$ for \methodGlobal and \methodIndividual.

\section{Calculation of error bounds} \label{sec:calc_error_bounds}
In \cref{sec:approach:distributional_bounds} we derive a
distributional error bound over a dataset and in
\cref{sec:approach:individual_bounds} a per-image bound. Throughout
this section, we assume the attacker model
$\gamma \in \Gamma \subseteq \mathbb{R}^d$. As we compute $E$ with
this assumption, our obtained certificate proves robustness of $g_{E}$
to $T^{I}_{\gamma}$ for $\gamma \in \Gamma$ with
$\|\gamma\|_{2} \leq r_{\gamma}$.

\subsection{Distributional bounds for \methodGlobal} \label{sec:approach:distributional_bounds}
For a fixed $E \in \mathbb{R}^{\geq 0}, \rho_E \in [0, 1]$, the probability
that $\epsilon$ is bounded by $E$ for $\vx \sim \mathcal{D}$ is
\begin{equation} \label{eq:qE}
  q_E
  :=
  \mathbb{P}_{\vx \sim \mathcal{D}}
  (
    \mathbb{P}_{\beta \sim \mathcal{N}(0, \sigma^2 \mathbb{1})}
      (
      \max_{\gamma \in \Gamma}
      \|\epsilon(\beta, \gamma, \vx)\|_2
      \leq
      E\
      ) \geq 1 - \rho_E
    ).
\end{equation}
In practice, for \methodGlobalD we evaluate $q_{E}$ by sampling $\vx$ and counting how
often the inner property holds. We compute the inner probability by: (i)
sampling multiple realizations of $\beta$, (ii) computing their
corresponding error $\epsilon$ and checking how many are successfully
bounded by $E$, and (iii) bounding the inner probability using
Clopper-Pearson. If this lower bound is larger than $1 - \rho_E$
we count this as a positive sample, else a negative one. Once these counts are obtained for a number
of sampled points $\vx$, we can apply Clopper-Pearson and obtain a lower bound $\underline{q_{E}}$ with the desired confidence.
For \methodGlobalx only the inner probability needs to be computed for an individual image $\vx$.
Formally this can be seen as considering the data distribution $\mathcal{D}$ that just contains $\vx$ (thus $q_{E} = 1$).

To compute the maximization over $\gamma$ we employ standard interval
analysis, which allows us to efficiently propagate lower and upper bounds \cite{Intervals}.
By propagating the hyperrectangle containing $\Gamma$ along with the sampled $\beta$ and
$\vx$, we eventually obtain a lower and upper bound for the norm calculation of which we take the maximum:
\begin{equation}
  \max_{\gamma \in \Gamma}\| \epsilon(\beta, \gamma, \vx) \|_2
  \leq
  \max
  \| T_\beta^I \circ T_{\Gamma}^I(\vx) - T_{\beta + \Gamma}^I(\vx) \|_2.
  \label{eq:maxEps}
\end{equation}
The result can be refined by splitting the hyperrectangle $\Gamma$ into smaller hyperrectangles $\Gamma_k$ for $k \in \{0, \dots, N\}$. The refined bound is
\begin{equation}
  \label{eq:disterr:bound}
  \max_{\gamma \in \Gamma}\| \epsilon(\beta, \gamma, \vx) \|_2
  \leq
  \max_{k \in \{0, \dots, N\}}
  \max
  \| T_\beta^I \circ T_{\Gamma_k}^I(\vx) - T_{\beta + \Gamma_k}^I(\vx) \|_2.
\end{equation}
To obtain $E$ in the first place, we perform the same sampling
operations as above (sample $\vx$ and $\beta$) but do not compute any probabilities,
that is, for each sample ($\vx$, $\beta$), we simply keep the values attained by \cref{eq:disterr:bound}.

For \methodGlobalD we pick an $E$ that bounds many of these values, choosing $\rho_E$ to be small.
Once $E$ is obtained, we compute $q_{E}$ as described above.
Instantiating the construction of \cref{sec:approach} with this $E$ yields the guarantee that for a
random image $\vx \sim \mathcal{D}$ the guarantees provided by  \cref{thm:bound} hold with probability $q_E$.

For \methodGlobalx, after we determine a suitable $E$ for the given $\vx$ we can determine $\rho_{E}$.

\subsection{Individual bounds for
  \methodIndividual} \label{sec:approach:individual_bounds} %
At inference time, we are given $\vx' := T^{I}_\gamma(\vx)$ but neither the original $\vx$ nor the parameter $\gamma \in \Gamma$, and we would like to certify that $g_E(\vx') = g_E(\vx)$.
When $\psi_{\beta}$ composes as required in \cref{sec:generalization}, this can be certified by showing $g$ is robust with a sufficient radius $r_{\gamma}$.
However, when $\psi_{\beta}$ does not compose, this can be accomplished by applying \cref{thm:bound} to show $g_E(\vx)$ is robust with radius $r_\gamma$ that includes $\Gamma$.
In turn, this requires a bound $E$ (see \cref{eq:relaxed:condE}) for $\vx$ (rather than $\vx'$):
\begin{equation}
  \mathbb{P}_{\beta \sim \mathcal{N}(0, \sigma^2 \mathbb{1})}
  \left(
    \max_{\gamma \in \Gamma} \| \epsilon(\beta, \gamma, \vx)  \|_2
    \leq
    E
  \right)
\geq 1 - \rho_E.
  \label{eq:indiv_err_bound}
\end{equation}
Now, we would like to compute an upper bound on the $\max$ term
\emph{without} having access to $\vx$. This is accomplished as
follows: First, in the above equation, we replace $\epsilon$ by its
definition (\cref{eq:epsilon_error}) and $T_\gamma^I(\vx)$ by
$\vx'$. We then replace $\vx$ with a symbolic set of possible inputs
that could have generated $\vx'$, denoted as
$(T_\Gamma^I)^{-1}(\vx') := \{\vx \in \mathbb{R}^{m} \mid
T^{I}_{\gamma}(\vx) = \vx', \gamma \in \Gamma \}$ which we can use
instead of $\vx$ due to the maximization over $\gamma$. As in
\cref{sec:approach:distributional_bounds}, we obtain the resulting bound via interval
analysis:
\begin{equation}
  \max_{\gamma \in \Gamma} \|\epsilon(\beta, \gamma, \vx)\|_2
  \leq
  \max \|
    T_\beta^I(\vx')
    -
    T_{\beta + \Gamma}^I \circ (T_\Gamma^I)^{-1}(\vx')
  \|_2.
  \label{eq:indiv_err_interval}
\end{equation}
The computation of the inverse $(T_\Gamma^I)^{-1}(\vx')$ is explained in \cref{subsec:inverse}.
By substituting \cref{eq:indiv_err_interval} in \cref{eq:indiv_err_bound} we can obtain and verify $E$ as in \cref{sec:approach:distributional_bounds} (except we do not need to sample $\vx$'s).
As before, we can refine the upper bound of \cref{eq:indiv_err_interval} by splitting $\Gamma$ into $\Gamma_k$. We note as the inverse does not depend on $\beta$, given $\vx'$, it only needs to be computed once
and can be reused whenever we evaluate
  \cref{eq:indiv_err_interval} for a given sample $\beta$.

\section{Inverse Computation} \label{subsec:inverse}
\begin{figure*}%
    \centering
    \begin{subfigure}[t]{.15\textwidth}
        \centering
        \includegraphics[width=0.7\textwidth]{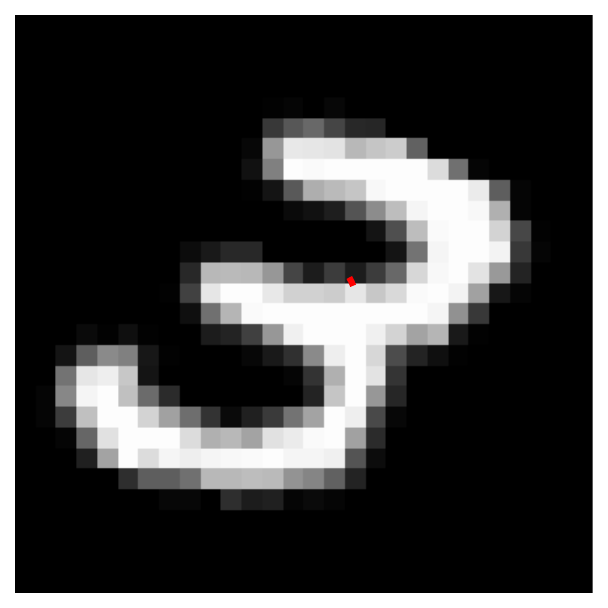}
        \subcaption{Rotated}
        \label{fig:Inverse_Calc:rotated}
    \end{subfigure}
    \begin{subfigure}[t]{.30\textwidth}
        \centering
        \includegraphics[width=0.35\textwidth]{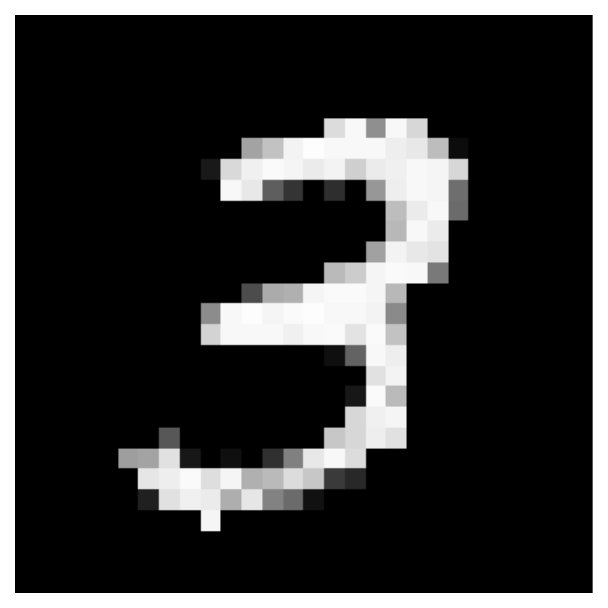}
        \includegraphics[width=0.35\textwidth]{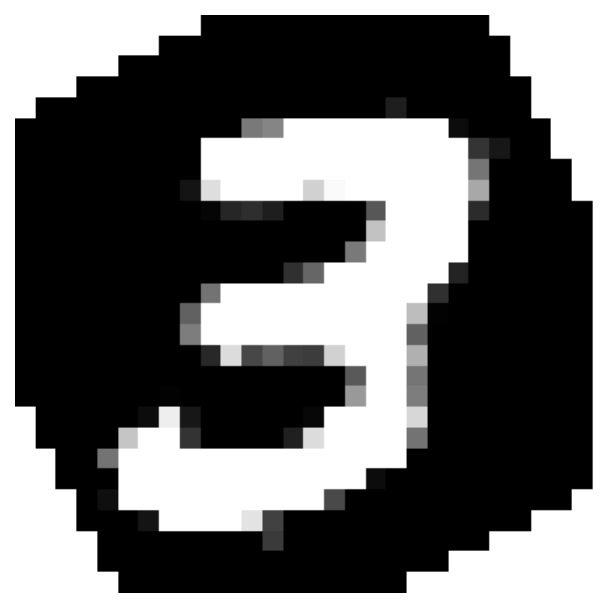}
        \subcaption{Inverse}
        \label{fig:Inverse_Calc:inverse}
    \end{subfigure}
    \begin{subfigure}[t]{.30\textwidth}
        \centering
        \includegraphics[width=0.35\textwidth]{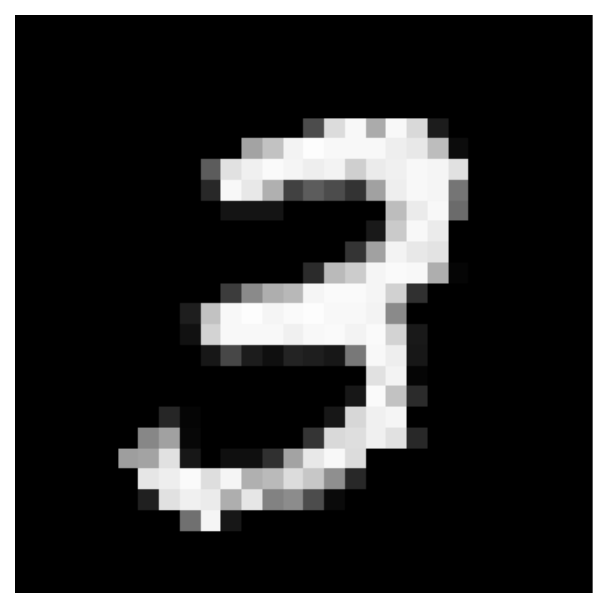}
        \includegraphics[width=0.35\textwidth]{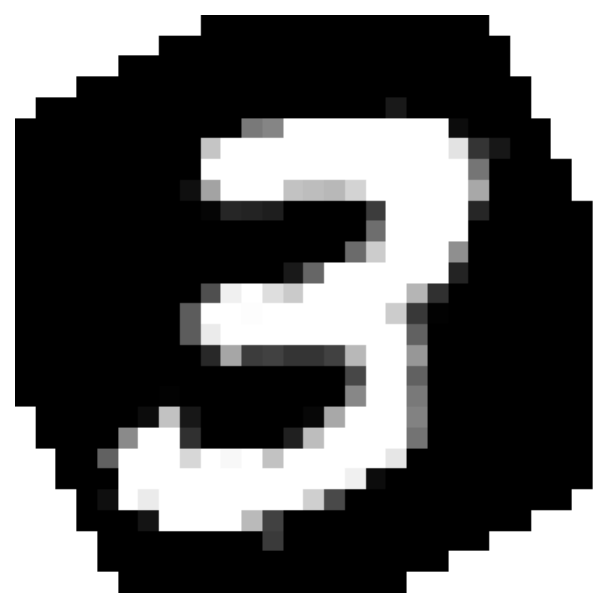}
        \subcaption{$10\times$ refined inverse}
        \label{fig:Inverse_Calc:refined}
    \end{subfigure}
    \begin{subfigure}[t]{.15\textwidth}
        \centering
        \includegraphics[width=0.7\textwidth]{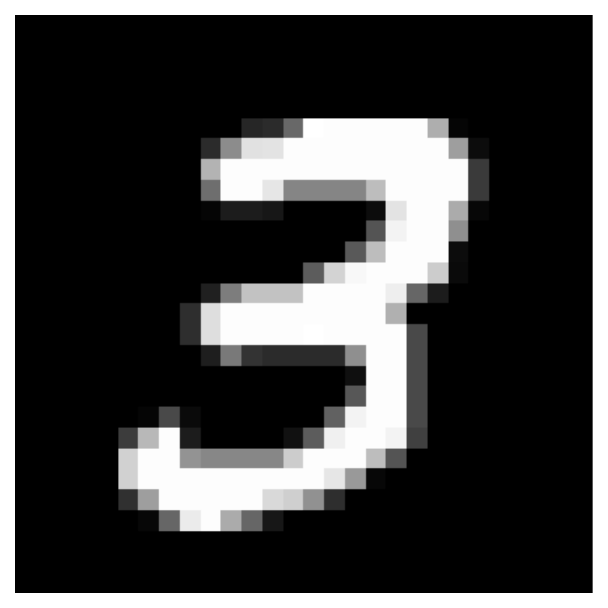}
        \subcaption{Original}
        \label{fig:Inverse_Calc:original}
    \end{subfigure}
    \caption{Over approximation of the inverse image. 
    The image pairs (b) and (c) depict the lower (left) and upper (right) interval pixel bounds for the
    inverse image and the $10\times$ refined image respectively.}
    \label{fig:Inverse_Calc}
    \vspace{-1em}
\end{figure*}

We now discuss how to obtain a set containing all possible inverse images.
That is, given $\vx' := T^{I}_{\gamma}(\vx)$ and $\gamma \in \Gamma$, we compute the set
$(T^{I}_{\Gamma})^{-1}(\vx')$ which contains all possible $\vx$.
First, we cover the necessary background. To ease
presentation, we assume even image height and width. 
We embed the images in $\mathbb{R}^2$
by centering them at 0 on an odd integer grid
$G := (2 \mathbb{Z}+1) \times (2 \mathbb{Z}+1)$
and centered at 0. We denote the value of a pixel at
$(i,j) \in G$ by $p_{i,j} \in [0, 1]$.

\paragraph{Transformations}
The pixel values $p'_{i',j'}$ for $(i',j') \in G$ of an image, produced by a
transformation $T_\gamma \colon \mathbb{R}^2 \to \mathbb{R}^2$ with parameter
$\gamma \in \mathbb{R}^d$, is calculated by interpolating at the inversely
transformed coordinate $T^{-1}_\gamma(i',j')$, followed by the interpolation $I$
resulting in $p'_{i',j'} = I \circ T^{-1}_\gamma(i',j')$.

\paragraph{Bilinear interpolation}
A prominent interpolation is
\emph{bilinear interpolation}, given by
\begin{equation}
    I(x,y) = p_{v,w}\tfrac{2+v-x}{2}\tfrac{2+w-y}{2}
    + p_{v,w+2}\tfrac{2+v-x}{2}\tfrac{y-w}{2}
    + p_{v+2,w}\tfrac{x-v}{2}\tfrac{2+w-y}{2}
    + p_{v+2,w+2}\tfrac{x-v}{2}\tfrac{y-w}{2},
    \label{eq:Interpolation}
\end{equation}
where $(v, w) \in G$ is the coordinate such that $(x,y)$ lies in the
$(v,w)$-interpolation region, that is $(x,y) \in [v, v+2] \times [w, w+2]$. We use $v$ and $w$ as grid indices in the context of the
interpolation $I$. If $p_{v,w}$ has no defined value because $(v,w)$ is out of
range for the image, we set $p_{v,w}$ to 0. 

We start by giving a procedure to
calculate constraints of a single pixel $(i,j)$ for a single color
channel, after which we present an iterative procedure to refine that
constraint. The inverse image is then obtained by following this
procedure for every pixel in every color channel. We illustrate the
steps in \cref{sec:approach:example} using the example of a rotated
image $\vx'$ (\cref{fig:Inverse_Calc:rotated}).

The attacker transformed the original image $\vx$
(\cref{fig:Inverse_Calc:original}) using $T^{I}_{\gamma}$
for $\gamma \in \Gamma$ and
therefore obtained the pixel values $p'_{i',j'}$ of the transformed image $\vx'$
by evaluating $p'_{i',j'} = I \circ T^{-1}_\gamma(i',j')$. The interpolation $I$
uses the pixel values $p_{i,j}$ of $\vx$. The following steps
invert this relation for every coordinate $(i,j)$:

\paragraph{Step 1} For every $(i',j') \in G$, we over-approximate the region the
pixel value $p'_{i',j'}$ could have been interpolated from, which is
$c_{i',j'} := T^{-1}_{\Gamma}(i',j')$,
$C := \{ c_{i',j'} \mid (i',j') \in G \}$.
In practice, only a finite subset of $C$ is used. In \cref{sec:refimenet},
we show how to calculate this subset efficiently.

\paragraph{Step 2} The interpolation $I$ is defined piecewise per
$(v,w)$-interpolation region $[v, v+2] \times [w, w+2]$, so the
algebraic form of $I$, \cref{eq:Interpolation} holds for each
interpolation region separately.  For every interpolation region
cornering $(i,j)$ that $c_{i',j'}$ intersects with, the pixel value
$p'_{i',j'}$ yields constraints for value $p_{i,j}$.  Here, we
describe just the constraint $q_{i,j}$ associated with
the $(i,j)$-interpolation region; others
($(i-2,j-2), (i-2,j), (i,j-2)$) work analogously. First, for every
$c_{i',j'} \in C$ we calculate its intersection with the
$(i,j)$-interpolation region, yielding
\begin{equation*}
  [x_l, x_u] \times [y_l,y_u] := c_{i',j'} \cap [i,i+2] \times [j,j+2].
\end{equation*}
We can plug this into the interpolation $I$, where we instantiate $(v,w) \gets
(i,j)$, resulting into %
\begin{equation}
  \begin{aligned}
    p'_{i',j'}
    \in
    I([x_l,x_u],[y_l,y_u])
    = \;
    & p_{i,j}\tfrac{2+i-[x_l,x_u]}{2}\tfrac{2+j-[y_l,y_u]}{2}
    + p_{i,j+2}\tfrac{2+i-[x_l,x_u]}{2}\tfrac{[y_l,y_u]-j}{2}
    \\
    &+ p_{i+2,j}\tfrac{[x_l,x_u]-i}{2}\tfrac{2+j-[y_l,y_u]}{2}
    + p_{i+2,j+2}\tfrac{[x_l,x_u]-i}{2}\tfrac{[y_l,y_u]-j}{2}.
  \end{aligned}
  \label{eq:constraints_pre}
\end{equation}
Next, we solve for the pixel value of interest $p_{i,j}$. Then, we replace
all other three pixel values $p_{i,j+2}$, $p_{i+2,j}$, and $p_{i+2,j+2}$
with the (trivial) $[0,1]$ constraint, covering all possible pixel
values. While this results into sound constraints for
$p_{i,j}$, instantiating $[x_l,x_u]$ and $[y_l,y_u]$ with its
corner $(x,y)$ furthest from $(i,j)$, yields still a sound but
more precise constraint $q_{i,j}$ for $p_{i,j}$. Here, this
amounts to $x \gets x_u$ and $y \gets y_u$.
\cref{sec:refimenet} presents a detailed explanation of the derivation.
The result is
\begin{align*}
  q_{i,j}
  =
  \left[
      p'_{i',j'} - \left(\tfrac{2+i-x_u}{2}\tfrac{y_u-j}{2}
      + \tfrac{x_u-i}{2}\tfrac{2+j-y_u}{2}
      + \tfrac{x_u-i}{2}\tfrac{y_u-j}{2}\right)
      ,
      p'_{i',j'}
  \right]
  \left(\tfrac{2+i-x_u}{2}\tfrac{2+j-y_u}{2}\right)^{-1}.
\end{align*}

\paragraph{Step 3} In order to be sound, we need to take the
union over $q_{i-2,j-2}, q_{i-2,j}, q_{i,j-2}, q_{i,j}$
for each $c_{i',j'}$. To gain precision, we can intersect all of those unions
and finally, we can intersect this constraint with the trivial
one, $[0,1]$, resulting in the final pixel constraint for pixel $p_{i,j}$:
\begin{equation}
  p_{i,j} \in [0,1] \cap
  \Big(
  \bigcap\limits_{c_{i',j'} \in C}
    q_{i-2,j-2}(c_{i',j'})
    \sqcup q_{i,j-2}(c_{i', j'})
    \sqcup q_{i-2,j}(c_{i', j'})
    \sqcup q_{i,j}(c_{i', j'})
  \Big),
  \label{eq:constraints_final}
\end{equation}
where $\sqcup$ denotes the \emph{join} operation, that is $[a,b] \sqcup [c,d] :=
[\min(a,c), \max(b,d)]$. If the intersection of $c_{i',j'}$ with the respective
$(v,w)$-interpolation region is empty, we omit $q_{v,w}$ in
\cref{eq:constraints_final}.

In \cref{sec:approach:individual_bounds}, we split $\Gamma$ into $\Gamma_k$.
It often happens that
one of the resulting
intervals is empty. Then we know for sure that $\gamma$ lies in a different $\Gamma_k$, speeding up the process substantially.

\paragraph{Refined Inverse} The constraints can be refined by following the same
steps as for calculating the inverse, but instead of replacing the (unknown)
pixel values in \cref{eq:constraints_pre} with $[0,1]$, we replace them with the
intervals calculated previously.
However, replacing $[x_l, x_u] \times [y_l, y_u]$ with the corner furthest away
from $(i,j)$ would be unsound.
To be sound, one needs to consider all 4 corners of every non-empty intersection
$[x_l, x_u] \times [y_l, y_u]$ and join all interval constraints.
Similarly, we use the previously calculated constraint for $p_{i,j}$ instead of
$[0,1]$ in \cref{eq:constraints_final}.
This procedure can be repeated to further increase precision. The final
result after applying the refinement $10$ times is shown in
\cref{fig:Inverse_Calc:refined} representing the lower (left) and upper (right)
interval bound for all pixels.

\subsection{Example} \label{sec:approach:example}
\begin{wrapfigure}{r}{.3\textwidth}
  \vspace{-3.5em}
  \begin{minipage}{\linewidth}
    \centering
    \captionsetup[subfigure]{justification=centering}
    \begin{minipage}{0.49\linewidth}
      \centering
      \includegraphics[width=0.9\linewidth]{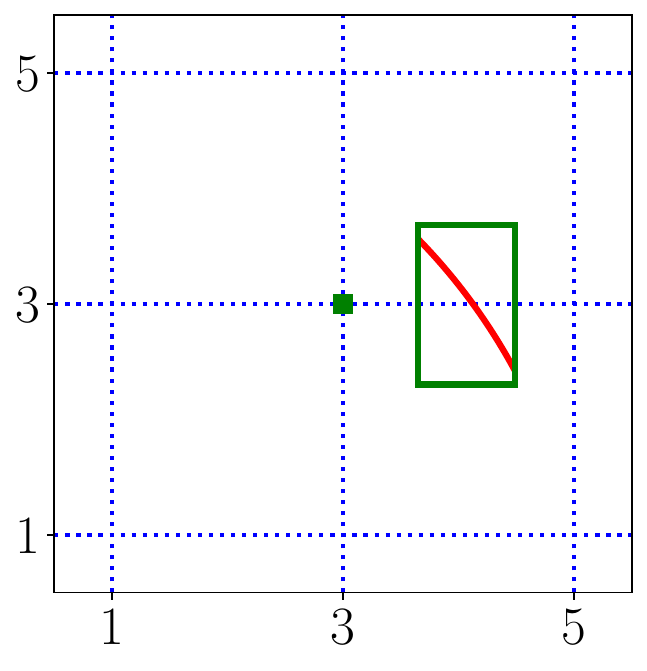}
      \subcaption{$c_{5,1}$} \label{fig:green_box}
    \end{minipage}
    \begin{minipage}{0.49\linewidth}
      \centering
      \includegraphics[width=0.9\linewidth]{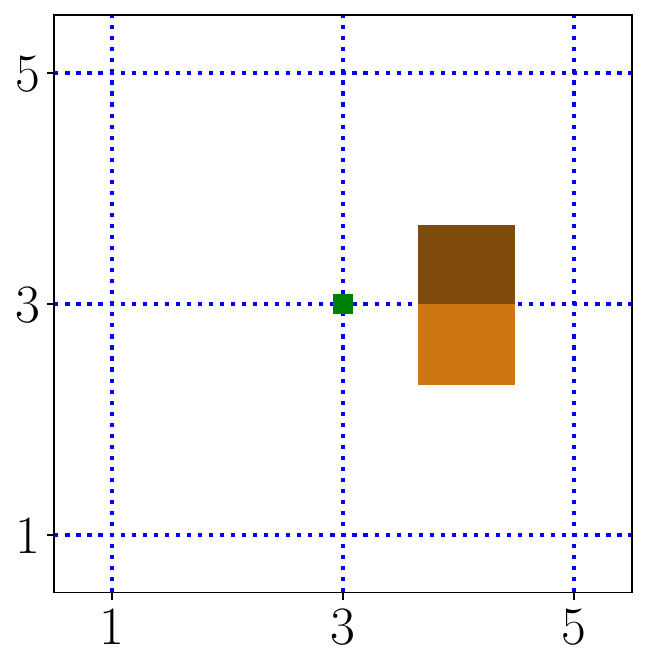}
      \subcaption{Intersections} \label{fig:intersections}
    \end{minipage}
  \end{minipage}
  \caption{To improve presentation, the red arc is $3\times$ longer.}
  \label{fig:inverse_calc}
  \vspace{-3em}
\end{wrapfigure}
We calculate the constraint for pixel $(3,3)$ of the original image
(\cref{fig:Inverse_Calc:original}), depicted as the green dot in
\cref{fig:inverse_calc} under the assumption $\gamma \in [23^\circ, 26^\circ]$.
We elaborate the constraints that pixel $(5, 1)$ of the rotated
image (\cref{fig:Inverse_Calc:rotated}) yields for pixel $(3,3)$ of the original image.

\paragraph{Step 1} We illustrate the calculation of the set $C$ for 
$c_{5,1} :=
  R_{[23^\circ, 26^\circ]}^{-1} \begin{psmallmatrix} 5 \\ 1 \end{psmallmatrix}
  =
  \begin{psmallmatrix}
    [4.06,4.21] \\
    [2.85,3.11]
  \end{psmallmatrix}
$.
The result is depicted as the green box in \cref{fig:green_box} enclosing the
red arc. The red arc shows the precise set of coordinates where the pixel value
$p'_{5,1}$ could have been interpolated from the original image $\vx$.

\paragraph{Step 2} The only non-empty intersections of $c_{5,1}$ with
interpolation regions (blue squares in \cref{fig:inverse_calc}), cornering $(3,3)$
are the $(3,1)$ and the $(3,3)$-interpolation regions, hence we omit
$q_{1,1}$ and $q_{1,3}$. The intersection with the
$(3,3)$-interpolation region yields $[x_l, x_u] = [4.06,4.21]$ and $[y_l, y_u] =
[3,3.11]$ (dark brown rectangle in \cref{fig:intersections}), hence
at the furthest corner $(x, y) = (4.21,3.11)$, we get
\begin{equation*}
  q_{3,3}
  =
  [0.73,2.48]
  =
  \left[
    p'_{5,1} - \left(
      \tfrac{5-x}{2} \tfrac{y-3}{2}
      + \tfrac{x-3}{2} \tfrac{5-y}{2}
      + \tfrac{x-3}{2}  \tfrac{y-3}{2}
    \right), p'_{5,1}
  \right]
  (\tfrac{5-x}{2} \tfrac{5-y}{2})^{-1}
  ,
\end{equation*}
and the intersection with the $(3,1)$-interpolation
region yields $[x_l,x_u] = [4.06,4.21]$ and $[y_l, y_u] = [2.85,3]$ (light brown
rectangle in \cref{fig:intersections}), hence at the furthest corner $(x,y) = (4.21,2.85)$, we get
\begin{equation*}
  q_{3,1}
  =
  [0.72,2.48]
  =
  \left[
    p'_{5,1} - \left(
      \tfrac{5-x}{2}\tfrac{3-y}{2}
      + \tfrac{x-3}{2} \tfrac{3-y}{2}
      + \tfrac{x-3}{2} \tfrac{y-1}{2}
      \right),
    p'_{5,1}
  \right]
  \left(
    \tfrac{5-x}{2} \tfrac{y-1}{2}
  \right)^{-1}
  .
\end{equation*}

\paragraph{Step 3} The join $q_{3,1} \sqcup q_{3,3}$ yields
$[0.72,2.48]$. After intersecting this with $[0,1]$ and the constraints from the
other $c_{i',j'} \in C$ (as in \cref{eq:constraints_final}), we are left with the final
result $p_{3,3} \in [0.73,1]$.

The final result of the inverse calculation for all pixels is shown in
\cref{fig:Inverse_Calc:inverse} representing the lower (left) and upper (right)
interval bounds for all pixels.
The iterative refinement is shown in \cref{fig:Inverse_Calc:refined}.

\section{Experimental Evaluation}\label{sec:evaluation} %
We now present our extensive evaluation of the different defenses discussed so far.

\subsection{Instantiation in Practice}\label{sec:inst-prat} %
In \cref{sec:approach} we showed how to certify robustness of $g_{E}$ to $T^{I}_{\gamma}$, obtained from \cref{eq:g} with $f := h_{E} \circ I$ and
$\psi_{\beta} := T_{\beta}$.  Since in practice $h_{E} \circ I$ and $T_{\beta}$ cannot be evaluated as $I$ and $T_{\beta}$ are not available independently,
in order to evaluate $g_E$ in practice, we need to re-write it as follows:
\begin{align*}
  g_E(\vx) &= \argmax_c
    \prob_{\beta~\sim~\mathcal{N}(0, \sigma^2 \mathbb{1})}
    \left(
      (h_{E} \circ I) \circ T_\beta(\vx) = c
  \right)\\
  &=
   \argmax_c
    \prob_{\beta~\sim~\mathcal{N}(0, \sigma^2 \mathbb{1})}
    \left(
      h_{E} \circ (I \circ T_\beta)(\vx) = c
    \right) =: g(\vx),
\end{align*}
which is an instantiation of \cref{eq:g} with $f := h_{E}$ and
$\psi_{\beta} := T^{I}_{\beta}$, both of which are available.

Further, as the probability in \cref{eq:g} cannot be computed
exactly, in practice we use the approximation introduced in
\citet{CohenRK19}: by taking $n$ samples around a given $\vx$ with
standard deviation $\sigma$, we can obtain $g(\vx)$ and the
corresponding robustness radius $r$ with confidence $1-\alpha$. Here, $n$
can be too small to make a statement with confidence $1-\alpha$, in
which case the classifier abstains.
Further, we let $\sigma_{\gamma}, \alpha_{\gamma}, n_{\gamma}, r_\gamma$ and
$\sigma_{\delta}, \alpha_{\delta}, n_{\delta}, r_\delta$
denote the parameters and radius required to use \cref{thm:bound} and \cref{thm:original} in practice, respectively.
Statistically sound certification as in \citet{CohenRK19} requires to fist take $n_{0}$ many samples fist and guessing the correct class on them. In our case we apply both $T^{I}_{\beta}$ and additive noise $\epsilon$ to these $n_{0}$ samples.

\subsection{Setup}\label{sec:setup} %
All experiments were performed on a machine with 2 GeForce RTX 2080 Tis
and an Intel(R) Core(TM) i9-9900K CPU.
As base classifiers $b$ we utilize neural networks in PyTorch
\cite{pytorch}, using \texttt{robustness} \cite{robustness} and \citet{Salman} for
training. Further, we implemented the interval analysis
(cf. \cref{sec:calc_error_bounds,subsec:inverse})
of the interpolation error and inverse computation in C++/CUDA.

We consider rotations $R_\gamma^{I}$ by $\gamma$
degrees and translations $\Delta_{\gamma}^{I}$ by
$\gamma \in \mathbb{R}^{2}$ with bilinear interpolation $I$. Here, we allow
the adversary to choose $\gamma \in \Gamma$.  For a scalar
$\Gamma_{\pm} \in \mathbb{R}^{\geq 0}$, we permit
$\Gamma := [-\Gamma_{\pm}, \Gamma_{\pm}]$ for rotations and
$\Gamma := [-\Gamma_{\pm}, \Gamma_{\pm}]^{2}$ for translations.
All estimates of $E$ include interpolation errors as well as 8-bit
representation (``rounding'') errors. When we estimate $\rho_{E}$ with confidence $\alpha_{E}$.

We evaluate on ImageNet \citep{ImageNet}, Restricted ImageNet
(RImageNet)\citep{TsiprasSETM19}, a subset of ImageNet with 10
classes, CIFAR-10 \citep{cifar}, and MNIST \citep{MNIST}. For the base classifier,
in \cref{sec:methodheuristic} we use standard models without any
additional training, while in the other sections we use models trained
with data augmentation (transformations, $\ell^2$-noise) using \cite{Salman}.

In \cref{sec:methodglobal,sec:methodindividual}, we apply a circular
or rectangular vignette for rotation and translation respectively, to
reduce error estimates in areas of the image where information is
lost. We also apply a Gaussian blur prior to classification to further
reduce the high-frequency components of the interpolation
error. \cref{sec:experiment-details} contains further details on
prepossessing, model training and parameters. Note that pre-processing
does not impact the theoretical guarantees as long as it is
consistently applied. We provide an ablation study regarding
vignetting and Gaussian blur in \cref{sec:furth-comp-ablat}.
Additional experiments, including other interpolation methods or audio
classification are provided in \cref{sec:addit-exper}, highlighting
the generality of our methods.
Throughout the section all individual certificates hold with overall confidence $1 - \alpha$ for $\alpha = 0.01$.

\subsection{\methodHeuristic}\label{sec:methodheuristic}

\begin{wraptable}{r}{0.5\textwidth}
  \vspace{-1.35em}
  \begin{small}
    \caption{Evaluation of \methodHeuristic.  We obtain Acc for $b$ on
      the test set and evaluate adv. Acc. on 3000 images obtained by
      the \texttt{worst-of-100} attack. $t$ denotes the
      average run time of $g$.} %
    \label{tab:heuristicResults}
    \begin{tabular}{@{}llccccc@{}} \toprule
      & & & Acc. &  \multicolumn{2}{c}{adv. Acc.}  & \\
      \cmidrule(lr){4-4} \cmidrule(lr){5-6}
      Dataset   & $T^{I}$ & $\Gamma_{\pm}$ & $b$ & $b$ & $g$ & t [s]\\
      \cmidrule(lr){1-7}
      MNIST     & $R^{I}$ & $30^{\circ}$ & 0.99 & 0.73 & 0.99 & 0.97\\
      CIFAR-10     & $R^{I}$ & $30^{\circ}$ & 0.91 & 0.26 & 0.85 & 0.95\\
      ImageNet  & $R^{I}$ & $30^{\circ}$ & 0.76 & 0.56 & 0.76 & 5.43\\
      \cmidrule(lr){1-7}
      MNIST     & $\Delta^{I}$ & $4$ & 0.99 & 0.03 & 0.53 & 0.86\\
      CIFAR-10     & $\Delta^{I}$ & $4$  & 0.91 & 0.44 & 0.79 & 0.95\\
      ImageNet  & $\Delta^{I}$ & $20$ & 0.76 & 0.65 & 0.75 & 6.70\\
      \bottomrule
    \end{tabular}
  \end{small}
  \vspace{-1em}
\end{wraptable}

We can quickly obtain a well-motivated but empirical defense by
instantiating \cref{thm:bound} with $\psi_\beta := T^{I}_{\beta}$ and
ignoring both the interpolation error \cref{eq:epsilon_error} and the
construction in \cref{sec:approach}.
\cref{tab:heuristicResults} shows results on an undefended classifier
$b$ and the \methodHeuristic smoothed version $g$. Here \emph{Acc.} is
obtained over the whole dataset. To evaluate \emph{adv. Acc.} we use
the \texttt{worst-of-k} proposed by \citet{EngstromMadryRotation},
which returns the $\gamma$ yielding the highest cross-entropy loss out
of $k$ randomly sampled $\gamma \sim \mathcal{U}(\Gamma)$. We apply
\texttt{worst-of-k} to 1000 images and produce 3 attacked images each,
resulting 3000 samples on which we then evaluate $b$ and $g$.
For $g$, the average inference time per image $t$ is generally fast,
where most time is spent on sampling transformations. The actual
inference, invoking $b$ on the samples, is not slowed down as all
samples fit into a single batch.
In this section we use
$n_{\gamma} = 1000, \sigma_\gamma = \Gamma_{\pm}$.
and $\alpha_{\gamma} = 0.01$.

We do not obtain certificates here as the assumptions of
\cref{thm:bound} are violated. However, we investigate in
\cref{sec:addit-exper} if the certification radius holds practically.

\begin{table}[ht]
  \centering
\begin{small}
  \caption{Evaluation of \methodGlobalD for $T^{I} := R^{I}$. We show the test set accuracy of $b$, certified accuracy of $g$ at different radii $r_{\gamma}$, along with the average run time $t$. $^{\#}$ denotes values obtained by sampling. Each certificate hold with overall confidence $0.99$.}
    \label{tab:rotResults} %

    \begin{tabular}{@{}lrrrrrrrrr@{}} \toprule
    & & & & \multicolumn{4}{c}{$g$ cert. acc at $r_{\gamma}$} \\
    \cmidrule(lr){5-8}
    Dataset & $E$ & $q_{E}$ & $b$ acc. & $0^{\circ}$ & $10^{\circ}$ & $20^{\circ}$ & $30^{\circ}$ & $t$ [s] & $n_{\gamma}$\\
      \midrule
    MNIST &   0.45  & 0.99 & 0.98 & 0.89 & 0.88 & 0.87 & 0.85 & 21.56 & 200\\
    CIFAR-10 &   0.55  & 0.99 & 0.56 & 0.31 & 0.28 & 0.25 & 0.19 & 89.75  & 50\\
    CIFAR-10 &   0.55  & 0.99 & 0.56 & 0.32 & 0.30& 0.28 & 0.25 & 351.47 & 200\\
    RImageNet & $1.20^{\#}$  & 0.97 & 0.78 & 0.74 & 0.72 & 0.68 & 0.61 & 100.73 & 50\\
    RImageNet & $1.35^{\#}$  & 0.99 & 0.78 & 0.64 & 0.62 & 0.56 & 0.50 & 100.13 & 50 \\
    ImageNet & $0.95^{\#}$  & 0.75 & 0.38 & 0.30 & 0.24 & 0.18 & 0.12 & 100.21 & 50\\
    ImageNet & $1.20^{\#}$  & 0.97 & 0.38 & 0.23 & 0.19 & 0.13 & 0.09 & 100.73 & 50\\
    ImageNet & $1.35^{\#}$  & 0.99 & 0.38 & 0.16 & 0.12 & 0.08 & 0.06 & 100.44 & 50\\
\bottomrule
\end{tabular}

\end{small}
\end{table}

\subsection{\methodGlobal\protect\footnote{\label{fn:eval}The results in \cref{sec:methodglobal,sec:methodindividual} differ from those in the version published at NeurIPS'20 due to an implementation bug we since fixed.
Further, we improved readability and provide additional results enable better comparison. A version of \cref{tab:rotResults} in the original layout can be found in \cref{sec:addit-methodglobal}.  }}
\label{sec:methodglobal}%
Here we evaluate \methodGlobalD and \methodGlobalx and compare with related approaches.

\paragraph{\methodGlobalD}
First we consider \methodGlobalD, where $E$ is obtained over the training set and expected to hold in distribution as discussed in \cref{sec:approach:distributional_bounds}.
This allows to run both prediction, where the robust accuracy shown here can be expected to hold in distribution, as well as certification (e.g. to show $g(\vx) = g(\vx')$) at inference time.

\cref{tab:rotResults}
shows our results for \methodGlobalD with
rotations.
We restrict the attacker model to
$\Gamma_{\pm} = 90^{\circ}$ for MNIST and $\Gamma_{\pm} = 30^{\circ}$ for other datasets.

To obtain $E$, we first sample the interpolation error in \cref{eq:maxEps} (using 1000 images). Subsequently, we choose $E$ slightly larger than this error.
With $E$ fixed, we test for $\rho_{E} = 0.001$ and expect $q_{E}$ to be close to 1
for all datasets. \cref{tab:rotResults} shows $q_{E}$ obtained with confidence
$1-\alpha_{E} = 0.999$ by using $1000$ samples for $\vx$ and $8000$ for $\beta$ (and correction
for possible test errors over $\beta$).
For small images, these bounds can be computed quickly.
However, for large images (ImageNet), the optimization over $\gamma$ for
many images is computationally expensive.
Thus, for ImageNet we replace the $\max$ in \cref{eq:qE} with the
maximum over 10 samples $\gamma \in \mathcal{U}(\Gamma)$ (indicated by $^{\#}$ in \cref{tab:rotResults}).
This formally restricts the certificate to only hold against random
attacks (such as \texttt{worst-of-10}).
However, if sufficient computational resources are available, the
$\max$ method can still be applied (we empirically find the method to
obtain similar values).
On (R)ImageNet (variable image size) we resize all images so that the
short side is 512 pixel prior to applying transformations.
As RImageNet is a subset of ImageNet, we use $E$ obtained on the later.

Now, we evaluate the accuracy of $b$ and $g$. For $b$ we use
the whole test set, while for $g$ we use 1000 samples.
In addition to the results in \cref{tab:rotResults}, at
$r_{\gamma} = 50$, the MNIST $g$ in this configuration still achieves $0.75$ certified accuracy.
Comparing the results on ImageNet and RImageNet shows that the limiting factor for our
method is the robustness of the base classifier, not the
size of the image.

We use $\sigma_{\gamma} = 30$ for all datasets and
$\sigma_{\delta} = 0.25, n_{\delta} = 10000, n_{0} = 10000$ for MNIST, %
$\sigma_{\delta} = 0.3, n_{\delta} = 15000, n_{0} = 10000$ for CIFAR-10 %
and $\sigma_{\delta} = 0.5, n_{\delta} = 2500, n_{0} = 200$ for (R)ImageNet in all but the $E=1.35$ setting where we use $\sigma_{\delta} = 0.55$.
We use $\alpha_{\gamma} = 0.005 - \alpha_{E}$ and $\alpha_{\delta} = \tfrac{0.005}{n_{\gamma}}$, such that the overall confidence for each certificate is $0.99$.
We expect these results to hold in distribution for at least $q_{E}$ percent of data points.

To showcase that \methodGlobalD can be applied as an online defense to obtain individual certificates $g(\vx) = g(\vx')$, we also evaluate on attacked images.
Using the same settings as above we can certify for 91 out of 100 MNIST images, adversarially rotated with $\Gamma_{\pm} = 30$, that they are classified the same as the original, while also being correct.

\begin{table}[ht]
  \centering
\begin{small}
  \caption{Evaluation of \methodGlobalx for $T^{I} := R^{I}$. We show the test set certified accuracy of $g$ at different radii $r_{\gamma}$, along with the average $E$ estimated, the average time $t_{E}$ to estimate $E$ and average time $t_{RS}$ to apply randomized smoothing.
    $^{\#}$ denotes values obtained by sampling. $^{*}$ we use a server with 128 threads on an AMD EPYC 7601 processor, on the same system as the other results these take 766 s. Each certificate hold with overall confidence $0.99$.}

    \label{tab:methodGlobalx} %
    \begin{tabular}{@{}lrrrrrrrrrrr@{}} \toprule
    & & & \multicolumn{5}{c}{$g$ cert. acc at $r_{\gamma}$} \\
    \cmidrule(lr){4-8}
    Dataset & $\Gamma_{\pm}$ & $\sigma_{\gamma}$ & $0^{\circ}$ & $10^{\circ}$ & $20^{\circ}$ & $30^{\circ}$ & $50^{\circ}$ & avg. $E$ & $t_{E}$ [s] & $t_{RS}$ [s] & $n_{\gamma}$\\
      \midrule
    MNIST & $50^{\circ}$ & 30 & 0.93 & 0.92 & 0.91 & 0.90 & 0.82 & 0.34 & 53.33 & 20.56 & 200\\
    CIFAR-10 & $30^{\circ}$ & 40 & 0.35 & 0.30 & 0.27 & 0.22 & - & 0.34 & 81.83 & 91.72 & 50\\
    CIFAR-10 & $10^{\circ}$  & 10 & 0.43 & 0.37 & - & - & - & 0.34 & 51.12 & 92.83 & 50 \\
    ImageNet & $30^{\circ}$ & 30  & 0.31 & 0.25 & 0.17 & 0.11 & - & 0.86$^{\#}$ & 73.58$^{*}$ & 100.47 & 50\\
    ImageNet & $30^{\circ}$ & 30  & 0.32 & 0.29 & 0.22 & 0.16 & - & 0.86$^{\#}$ & 73.58$^{*}$ & 396.50 & 200\\
\bottomrule
\end{tabular}

\end{small}
\end{table}

Finally, we evaluate translations on MNIST ($E = 0.65$, $\rho_{E}=0.99$, $\Gamma_{\pm} = 2$) and achieve  certified accuracy 64\% and 49\% at
$r_{\gamma}$ of 0 and $\sqrt{2}$, respectively. We use $\sigma_{\gamma} = 1.5, \sigma_{\delta} = 0.25, n_{\gamma} = 200$
and the other parameters as for rotation.

\paragraph{\methodGlobalx}
We now evaluate \methodGlobalx.
Here, we certify a classifier $g$ (for a fixed $b$, $\sigma_{\gamma}$, $\sigma_{\epsilon}$) on the test set. At inference time we just predict new samples and expect the obtained robustness certificates to hold in distribution. We show the certification results in \cref{tab:methodGlobalx}.

To this end, it is sufficient to obtain $E, \rho_{E}$ (as in \cref{eq:qE}) for each individual image $\vx$ rather than for the whole data distribution.
Naturally, these individual bounds are much lower than $E$ obtained over the data distribution, allowing for better accuracy.

On MNIST and CIFAR-10, for each image $\vx$ we use $100$ samples of $\beta$ (optimizing over $\gamma$) to guess $E$ as $1.1$ times the largest observed error.
Then, we use another $400$ samples of $\beta$ to test for $\rho_{E}$ with $\alpha_{E}=0.001$.
On ImageNet we use the same procedure but chose $E$ as the largest observed error over $240$ samples of $\beta$ plus $0.03$. As for \methodGlobalD, we use sampling to approximate the maximization.
When optimizing for $E$ we stop either when the highest bound for any $\beta$ is $0.3$ or after a timeout of $2$ minutes. Varying these parameters may allow for an even lower $E$ at the cost of more run time.

For translation ($\sigma_{\gamma} = 0.25, n_{\gamma} = 200, \Gamma_{\pm} = 2, \sigma_{\gamma} = 2.5$) we use the same setup but optimize until the maximal error is lower than $0.55$ or a timeout of 2 minutes is reached. We choose $E$ as the maximal error over the first 100 images plus $0.02$. With an average $E$ of $0.56$, we obtain a certified accuracy of $0.89$, $0.86$, $0.85$ and $0.82$ at radii $r_{\gamma}$ of $0$, $1$, $\sqrt{2}$, and $2$, respectively. The average analysis took $62.22$s and certification $19.33$s. We note that in general \methodGlobalD results are a lower bound for the results of \methodGlobalx. In theory, \methodGlobalD can perform better if $\rho_{E}$ is lower (e.g. when more samples are used). However, in practice this is offset by the tighter error bound. We see that the average $E$ is much lower than the upper bound used in \methodGlobalD, allowing better results.

Unless stated differently in \cref{tab:methodGlobalx}, we use the same parameters as for \methodGlobalD, with the exception of $\sigma_{\delta}$ where we use $0.15, 0.20, 0.18, 0.50, 0.40$ for the order as in  \cref{tab:methodGlobalx}.
As before, all certificates hold with confidence $0.99$ as $\alpha_{\gamma} = 0.005 - \alpha_{E}$ and $\alpha_{\delta} = \tfrac{0.005}{n_{\gamma}}$.

\paragraph{Comparison to other work} %
Related approaches, \citet{DeepG,LinyiLiSmoothing}, provide distribution certificates, \eg they certify images on the test set and the obtained certified accuracy can then be expected to hold for new, potentially perturbed images. However, it is not possible to certify novel inputs -- this is the same setting as with \methodGlobalx. \citet{DeepG} certifies model accuracy on the test set and thus provides a distributional bound. On MNIST they report $87.01\%$ of
certified accuracy for rotations with $\pm 30^{\circ}$ ($35s$ per image), which with further refinement (at cost of run time) can be increased to $97\%$, and for translations with $\pm 2$ pixels $76.30 \%$ ($263$s per image). On CIFAR-10 they certify rotation up to $10^{\circ}$ for $62.51 \%$, but unlike our work, the method does not scale to larger image sizes and models, such as ResNet-50 on ImageNet.  We provide further comparison with \citet{DeepG} in \cref{sec:furth-comp-ablat}. For a comparison with \cite{LinyiLiSmoothing,TSS}, we refer the reader to that work. \citet{PeiCYJ17} certify $\pm 2^{\circ}$ in 714 s per image on ImageNet. However, in contrast to us they focus on nearest-neighbor interpolation, which can be enumerated.

\subsection{\methodIndividual\protect\cref{fn:eval}} \label{sec:methodindividual}
Finally, we evaluate \methodIndividual, where we compute $E$ on the given input.
The bound computed by interval analysis is always sound, but may be quite large due to the loss of precision inherent in interval analysis.
We show results for MNIST and discuss challenges on larger datasets in \cref{sec:inverse-rich-images}.
To this end, we attack images as in \cref{sec:methodheuristic}, and
subsequently apply \methodIndividual. We use the \texttt{worst-of-100}
attack on a base classifier $b$ to obtain a set of attacked images.
To these images we then apply \methodIndividual.
For rotations
($\Gamma_{\pm}=10, \sigma_{\gamma}=30, \sigma_{\delta}=0.3, n_{\gamma} = 200, n_{\delta}=10000, n_{0} = 10000$, 3 attacks per image, 1000 images) we fix $E=0.45$ and use 500 samples of $\beta$
to obtain the correct $\rho_{E}$ (\cref{eq:indiv_err_bound}) with $\alpha_{E} = 0.001$. $g$
was correct on 82\% of attacked  images. For 81\% we could certify
that the attacked image that classifies the same as the original.  The
analysis of $E$ took on average 0.26 s and the randomized smoothing
25.03 s.
For translation we use the same setup
($\Gamma_{\pm}=1, \sigma_{\gamma}=1.5, \sigma_\delta = 0.3, n_\gamma =
200$, 3 attacks per image, 100 images) also starting with $E=0.45$. $g$ classified 75\%  of attacked images correctly and could certify
$r_{\gamma} \geq 1$ and thereby $g(\vx) = g(\vx')$ on
$59\%$
while on average taking 14.14 s for analysis and 19.89 s
for smoothing per image.  The reason for the higher run time is that
compared to rotation fewer possible inverses can be discarded.
We use 10 refinement steps for both rotations and translations.

\subsection{Limitations \& Generalization} %

While we showcased translation and rotation, our approach is not
limited to these transformations or to specific interpolation methods. \methodHeuristic
and \methodGlobal can be directly adapted to other transformations,
interpolation schemes or domains such as audio (see \cref{sec:addit-exper}).
\methodIndividual can also be adapted but requires additional care.
Generally, \cref{thm:bound} can be applied to all parameterized data
transformations that are additive in the parameter space. If this
holds up to a small error, as discussed here, \methodGlobal and
\methodIndividual can be applied.  While many data transformations,
\eg image scaling are additive in their parameter space, their
compositions are often not (\eg rotation and translation).
As we are most limited by the $\ell^{p}$-robustness of $b$, any gains
in $\ell^{p}$ certification will directly improve our method. Further,
\methodIndividual can incur a large loss of precision in the inverse
computation. Improving this directly increases the applicability of
the method.

\section{Conclusion}
We presented the first generalization of randomized smoothing to image transformations, a challenging task as image transformations do not compose. Based on this generalization, we presented several certified defenses
allowing for both distributional and individual guarantees (relying on statistical error bounds or on efficient inverse computation). Our exploration highlights interesting trade-offs between certification guarantees and tightness of the resulting bounds. Finally, our extensive evaluation demonstrates the methods can handle realistic datasets and models.

\section{Broader Impact}

In general, methods from artificial intelligence can be applied in beneficial and
malicious ways. While this poses a threat in itself, verification techniques
provide formal guarantees for the robustness of the model, independently of the
intended use case. Certification techniques could therefore distinguish a
potentially unstable model from a stable one in safety critical settings, \eg
autonomous driving. However, especially for regulators, it is of utter
importance to understand the certified properties of different
certification methods precisely, as to avoid legal model deployment in safety
critical applications based on misconceptions.

\begin{ack}
  We thank the authors of \cite{LinyiLiSmoothing}, in particular Maurice Weber and Linyi Li, for insightful discussion and pointing out an implementation bug. Further, we thank all reviewers for their helpful comments and feedback.

  We do not have any additional funding or compensation to disclose.
\end{ack}

\message{^^JLASTBODYPAGE \thepage^^J}

\clearpage
\bibliography{references}
\bibliographystyle{unsrtnat}

\message{^^JLASTREFERENCESPAGE \thepage^^J}

\ifbool{includeappendix}{%
	\clearpage
	\appendix

  {
    \centering
    {\LARGE\bf Supplementary Material for\\ Certified Defense to Image Transformations via Randomized Smoothing\par}
  }
	\section{Proof of \cref{thm:bound}} \label{proof}

We now proceed to proof \cref{thm:bound}. We achieve this by first
proofing an auxiliary Theorem and Lemma, and then instantiating as a
special case \cref{thm:bound} of these slightly more general results.

\begin{theorem} \label{thm:aux}
  Let $\vx \in \mathbb{R}^n$, $f: \mathbb{R}^m \to \mathcal{Y}$ be a
  classifier, $\psi_\beta: \mathbb{R}^n \to \mathbb{R}^m$ a composable
  transformation for $\beta\,\sim\,\mathcal{N}(0, \Sigma)$ with a
  symmetric, positive-definite covariance matrix
  $\Sigma \in \mathbb{R}^{m \times m}$. If
  \begin{align*}
    &\prob_\beta(f \circ \psi_\beta(x) = c_{A})
    =
    p_A
    \geq
      \underline{p_{A}}
      \geq
    \overline{p_{B}}
    \geq
    p_B
    =
    \max_{c_{B} \neq c_{A}} \prob_\beta(f \circ \psi_\beta(x) = c_{B}),
  \end{align*}
  then $g \circ \psi_\gamma(\vx) = c_A$ for all $\gamma$ satisfying
  \begin{equation*}
    \sqrt{\gamma^{T} \Sigma^{-1} \gamma} < \tfrac{1}{2}(\Phi^{-1}(\underline{p_{A}}) -
    \Phi^{-1}(\overline{p_{B}})) =: r_\gamma.
  \end{equation*}
  \begin{proof}
    
The assumption is 
\begin{align*}
    &\prob \left( \left( f \circ \psi_{\beta} \right) (\vx) = c_{A} \right)
    = p_{A} \geq \underline{p_{A}} 
    \geq 
    \overline{p_{B}} 
    \geq p_{B} = 
    \prob \left( \left( f \circ \psi_{\beta} \right) (\vx) = c_{B} \right).  
\end{align*}

By the definition of $g$ we need to show that
\begin{equation*}
  \prob \left( \left( f \circ \psi_{\beta + \gamma} \right) (\vx) = c_{A} \right)
  \geq
  \prob \left( \left( f \circ \psi_{\beta + \gamma} \right) (\vx) = c_{B} \right).
\end{equation*}
We define the set
$A := \{ \vz \mid \gamma^T \Sigma^{-1} \vz \leq \sqrt{\gamma^T \Sigma^{-1} \gamma} \Phi(\underline{p_{A}}) \}$.
We claim that for $\beta \sim \mathcal{N}(0, \Sigma)$, we have
\begin{align}
  \prob(\beta \in A) = \underline{p_{A}}
  \label{eq:app:1}
  \\
  \prob(f \circ \psi_{\beta + \gamma}(x) = c_{A}) &\geq \prob(\beta + \gamma \in A).
  \label{eq:app:2}
\end{align}
First, we show that \cref{eq:app:1} holds. 
\begin{align*}
  \prob(\beta \in A)
  &= 
  \prob(
      \gamma^T \Sigma^{-1} \beta 
      \leq 
      \sqrt{\gamma^T \Sigma^{-1} \gamma} \Phi(\underline{p_{A}})
  )
  \\
  &= 
  \prob(
      \gamma^T \Sigma^{-1} \mathcal{N}(0, \Sigma) 
      \leq 
      \sqrt{\gamma^T \Sigma^{-1} \gamma} \Phi(\underline{p_{A}})
  )
  \\
  &= 
  \prob(
      \gamma^T \sqrt{\Sigma^{-1}} \mathcal{N}(0, \mathbb{1}) 
      \leq 
      \sqrt{\gamma^T \Sigma^{-1} \gamma} \Phi(\underline{p_{A}})
  )
  \\
  &= 
  \prob(
      \mathcal{N}(0, \gamma^T \Sigma^{-1} \gamma) 
      \leq 
      \sqrt{\gamma^T \Sigma^{-1} \gamma} \Phi(\underline{p_{A}})
  )
  \\
  &= 
  \prob(
      \sqrt{\gamma^T \Sigma^{-1} \gamma} \mathcal{N}(0, 1) 
      \leq 
      \sqrt{\gamma^T \Sigma^{-1} \gamma} \Phi(\underline{p_{A}})
  )
  \\
  &= 
  \prob(
      \mathcal{N}(0, 1) 
      \leq 
      \Phi(\underline{p_{A}})
  )
  \\
  &= \Phi(\Phi^{-1}(\underline{p_{A}})) 
  \\
  &= \underline{p_{A}}
\end{align*}

Thus \cref{eq:app:1} holds. Next we show that \cref{eq:app:2} holds.
For a random variable $v\,\sim\,\mathcal{N}(\mu_{v}, \Sigma_{v})$ we
write $p_{v}(z)$ for the evaluation of the Gaussian cdf at point
$z$.

\begin{align*}
  &\prob(f \circ \psi_{\beta + \gamma}(x) = c_A) - \prob(\beta + \gamma \in A)
  \\
  &= \int_{\mathbb{R}^d} [f \circ \psi_\vz = c_A]\, p_{\beta + \gamma}(z) dz - \int_A p_{\beta+\gamma}(z) dz 
  \\
  &= \int_{\mathbb{R}^d \setminus A} [f \circ \psi_\vz(x)=c_{A}]\, p_{\beta + \gamma}(z) dz + \int_A [f \circ \psi_\vz(x)=c_{A}]\, p_{\beta + \gamma}(z) dz - \int_A p_{\beta+\gamma}(z) dz\\
  &= \int_{\mathbb{R}^d \setminus A} [f \circ \psi_\vz(x)=c_{A}]\, p_{\beta + \gamma}(z) dz + \int_A [f \circ \psi_\vz(x)=c_{A}]\, p_{\beta + \gamma}(z) dz\\
  &\qquad- \left( \int_A [f \circ \psi_\vz(x)=c_{A}]\,  p_{\beta+\gamma}(z) dz + \int_A [f \circ \psi_\vz(x)\neq c_A]\,  p_{\beta+\gamma}(z) dz  \right)\\
  &= \int_{\mathbb{R}^d \setminus A} [f \circ \psi_\vz(x)=c_{A}]\, p_{\beta + \gamma}(z) dz - \int_A [f \circ \psi_\vz(x)\neq c_A]\,  p_{\beta+\gamma}(z) dz\\
  &\stackrel{\cref{lem:app:1}}{\geq}
  t \left(
  \int_{\mathbb{R}^d \setminus A} [f \circ \psi_\vz(x)=c_{A}]\, p_\beta(z) dz
  - \int_A [f \circ \psi_\vz(x) \neq c_A]\, p_\beta(z) dz
  \right)
  \\
  &=
  t \left(
  \int_{\mathbb{R}^d} [f \circ \psi_\vz(x)=c_{A}]\, p_\beta(z) dz
  - \int_A p_\beta(z) dz
  \right)
  \\
  &\overset{\cref{eq:app:1}}{\geq} 0.
\end{align*}
Thus also \cref{eq:app:2} holds.

Next, we claim that for $B := \{z \mid \gamma^T \Sigma^{-1} \vz  \geq
\sqrt{\gamma^T \Sigma^{-1} \gamma} \Phi^{-1}(1 - \overline{p_{B}})\}$ holds that 
\begin{align}
  \prob (f \circ \psi_\beta(x) = c_{B}) &\leq \prob (\beta \in B) 
  \label{eq:app:3}
  \\
  \prob (f \circ \psi_{\beta + \gamma}(x) = c_{B}) &\leq \prob (\beta + \gamma \in B)
  \label{eq:app:4}
\end{align}
The proofs for \cref{eq:app:3} and \cref{eq:app:4} are analogous to the proofs
for \cref{eq:app:1} and \cref{eq:app:2}.

Now we derive the conditions that lead to $\prob(\beta + \gamma \in A) > \prob(\beta + \gamma \in B)$:
\begin{align*}
  \prob(\beta + \gamma \in A) 
  &= \prob \left(\gamma^T \Sigma^{-1}(\beta + \gamma) \leq 
  \sqrt{\gamma^T \Sigma^{-1} \gamma} \Phi^{-1}(\underline{p_{A}}) \right)
  \\
  &= \prob \left(\gamma^T \Sigma^{-1}(\Sigma^{\frac{1}{2}} \mathcal{N}(0, \mathbb{1}) + \gamma) \leq 
  \sqrt{\gamma^T \Sigma^{-1} \gamma} \Phi^{-1}(\underline{p_{A}}) \right)
  \\
  &= \prob \left(
    \gamma^T \sqrt{\Sigma^{-1}} \mathcal{N}(0, \mathbb{1}) + \gamma^T \Sigma^{-1} \gamma 
    \leq 
    \sqrt{\gamma^T \Sigma^{-1} \gamma} \Phi^{-1}(\underline{p_{A}}) \right)
  \\
  &= \prob \left(
    \sqrt{\gamma^T \Sigma^{-1} \gamma} \mathcal{N}(0, \mathbb{1}) + \gamma^T \Sigma^{-1} \gamma 
    \leq 
    \sqrt{\gamma^T \Sigma^{-1} \gamma} \Phi^{-1}(\underline{p_{A}}) \right)
  \\
  &= \prob \left(
    \mathcal{N}(0, \mathbb{1}) + \sqrt{\gamma^T \Sigma^{-1} \gamma}
    \leq 
    \Phi^{-1}(\underline{p_{A}}) \right)
  \\
  &= \prob \left(
    \mathcal{N}(0, \mathbb{1})
    \leq 
    \Phi^{-1}(\underline{p_{A}}) - \sqrt{\gamma^T \Sigma^{-1} \gamma} \right)
  \\
  &= \Phi(
    \Phi^{-1}(\underline{p_{A}}) - \sqrt{\gamma^T \Sigma^{-1} \gamma})
\end{align*}

Similarly, we have
\begin{align*}
  \prob(\beta + \gamma \in B) 
  &=
  \prob \left(
    \mathcal{N}(0, \mathbb{1})
    \geq 
    \Phi^{-1}(1 - \overline{p_{B}}) - \sqrt{\gamma^T \Sigma^{-1} \gamma} \right) 
  \\
  &= \Phi( \sqrt{\gamma^T \Sigma^{-1} \gamma} -
    \Phi^{-1}(1 - \overline{p_{B}}))
\end{align*}

Thus, we get
\begin{alignat*}{2}
  & \qquad \qquad \qquad \prob(\beta + \gamma \in A) &&> \prob(\beta + \gamma \in B)
  \\
  \Leftrightarrow & \qquad
  \Phi(\Phi^{-1}(\underline{p_{A}}) - \sqrt{\gamma^T \Sigma^{-1} \gamma})
  &&> 
  \Phi(\sqrt{\gamma^T \Sigma^{-1} \gamma}-\Phi^{-1}(1 - \overline{p_{B}}))
  \\
  \Leftrightarrow & \qquad
  \Phi^{-1}(\underline{p_{A}}) - \sqrt{\gamma^T \Sigma^{-1} \gamma}
  &&> 
  \sqrt{\gamma^T \Sigma^{-1} \gamma}-\Phi^{-1}(1 - \overline{p_{B}})
  \\
  \Leftrightarrow & \qquad
  \Phi^{-1}(\underline{p_{A}})+\Phi^{-1}(1 - \overline{p_{B}})
  &&> 
  2 \sqrt{\gamma^T \Sigma^{-1} \gamma}
  \\
  \Leftrightarrow & \qquad
  \tfrac{1}{2}(\Phi^{-1}(\underline{p_{A}})-\Phi^{-1}(\overline{p_{B}}))
  &&> 
  \sqrt{\gamma^T \Sigma^{-1} \gamma}.
\end{alignat*}

\end{proof}

\end{theorem}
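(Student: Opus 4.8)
To prove \cref{thm:aux} the plan is to adapt the Neyman--Pearson argument of \citet{CohenRK19} from isotropic additive noise to composable transformations with anisotropic Gaussian parameter noise $\beta\sim\mathcal{N}(0,\Sigma)$. Since composability gives $\psi_{\beta+\gamma}=\psi_\beta\circ\psi_\gamma$, the event defining $g\circ\psi_\gamma(\vx)$ is a Gaussian integral against the \emph{shifted} density $p_{\beta+\gamma}$, so by the definition of $g$ it suffices to show, for each $c_B\neq c_A$, that $\prob_\beta(f\circ\psi_{\beta+\gamma}(\vx)=c_A) > \prob_\beta(f\circ\psi_{\beta+\gamma}(\vx)=c_B)$. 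Everything below is driven by bounding how much these class probabilities move under the shift $\beta\mapsto\beta+\gamma$.

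First I would reduce to a one-dimensional quantity: whitening by $\Sigma^{-1/2}$ shows that the relevant probabilities depend on $\gamma$ only through its Mahalanobis norm $\sqrt{\gamma^T\Sigma^{-1}\gamma}$. Accordingly I would introduce the half-space
\[
 A:=\bigl\{\vz:\gamma^T\Sigma^{-1}\vz\le \sqrt{\gamma^T\Sigma^{-1}\gamma}\,\Phi^{-1}(\underline{p_A})\bigr\},
\]
calibrated---using that $\gamma^T\Sigma^{-1}\beta\sim\mathcal{N}(0,\gamma^T\Sigma^{-1}\gamma)$---so that $\prob_\beta(\beta\in A)=\underline{p_A}$, together with the mirror half-space $B$ with $\prob_\beta(\beta\in B)=\overline{p_B}$. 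The structural point is that $A$ is the \emph{worst} acceptance region of its mass, because the likelihood ratio $p_{\beta+\gamma}(\vz)/p_\beta(\vz)=\exp\!\bigl(\gamma^T\Sigma^{-1}\vz-\tfrac12\gamma^T\Sigma^{-1}\gamma\bigr)$ is monotone in exactly the linear functional $\gamma^T\Sigma^{-1}\vz$ that defines $A$.

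The technical core is a likelihood-ratio (Neyman--Pearson) inequality, which is the auxiliary lemma the proof rests on (\cref{lem:app:1}): if $t$ is the value of the likelihood ratio on the boundary hyperplane of $A$, monotonicity gives $p_{\beta+\gamma}\ge t\,p_\beta$ on $A^c$ and $p_{\beta+\gamma}\le t\,p_\beta$ on $A$, so for all $[0,1]$-valued weights $u,v$ one gets $\int_{A^c} u\,p_{\beta+\gamma}-\int_A v\,p_{\beta+\gamma}\ge t\bigl(\int_{A^c} u\,p_\beta-\int_A v\,p_\beta\bigr)$. Taking $u=[f\circ\psi_{(\cdot)}(\vx)=c_A]$ and $v=1-u$ turns the right-hand side into $t\,(p_A-\underline{p_A})\ge 0$ and the left-hand side into $\prob_\beta(f\circ\psi_{\beta+\gamma}(\vx)=c_A)-\prob_\beta(\beta+\gamma\in A)$, hence $\prob_\beta(f\circ\psi_{\beta+\gamma}(\vx)=c_A)\ge\prob_\beta(\beta+\gamma\in A)$; the mirror argument gives $\prob_\beta(f\circ\psi_{\beta+\gamma}(\vx)=c_B)\le\prob_\beta(\beta+\gamma\in B)$. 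Finally I would compute the shifted half-space masses by the same one-dimensional Gaussian calculation, obtaining $\prob_\beta(\beta+\gamma\in A)=\Phi\bigl(\Phi^{-1}(\underline{p_A})-\sqrt{\gamma^T\Sigma^{-1}\gamma}\bigr)$ and $\prob_\beta(\beta+\gamma\in B)=\Phi\bigl(\sqrt{\gamma^T\Sigma^{-1}\gamma}-\Phi^{-1}(1-\overline{p_B})\bigr)$; since $\Phi$ is increasing, the first exceeds the second exactly when $\sqrt{\gamma^T\Sigma^{-1}\gamma}<\tfrac12(\Phi^{-1}(\underline{p_A})-\Phi^{-1}(\overline{p_B}))=r_\gamma$.

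The step I expect to be the main obstacle is this likelihood-ratio inequality: one has to pick exactly the right half-space (its boundary must be a level set of $p_{\beta+\gamma}/p_\beta$, which is what forces $\Sigma^{-1}$ inside the linear form rather than a plain $\gamma^T\vz$), fix the boundary constant $t$ consistently across both pieces, and keep the sign bookkeeping straight when combining the contributions over $A^c$ and $A$; this is where the anisotropy of $\Sigma$ genuinely enters, so the generalization beyond \citet{CohenRK19} is not purely cosmetic. Positive-definiteness of $\Sigma$ is used so that $\Sigma^{-1/2}$ and the Mahalanobis norm make sense; the proxy-classifier refinement in \cref{thm:bound} then follows by noting that replacing $f$ with $f'$ changes each class probability by at most $\rho$, so one reruns the argument with $\underline{p_A}-\rho$ and $\overline{p_B}+\rho$, and the isotropic special case $\Sigma=\sigma^2\mathbb{1}$ recovers the stated radius $\tfrac{\sigma}{2}(\Phi^{-1}(\underline{p_A})-\Phi^{-1}(\overline{p_B}))$.
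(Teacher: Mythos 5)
Your proposal is correct and follows essentially the same route as the paper's proof: the same Mahalanobis half-space $A$ (and mirror $B$) defined by the linear functional $\gamma^T\Sigma^{-1}\vz$, the same likelihood-ratio monotonicity lemma playing the role of \cref{lem:app:1}, the same Neyman--Pearson-style bound yielding $\prob(f\circ\psi_{\beta+\gamma}(\vx)=c_A)\geq\prob(\beta+\gamma\in A)$ and its mirror, and the same one-dimensional Gaussian computation of the shifted half-space masses leading to the radius condition. Your calibration constant $\Phi^{-1}(\underline{p_A})$ in the definition of $A$ is the correct one (the paper's displayed definition writes $\Phi(\underline{p_A})$, a typo that its own subsequent calculation implicitly fixes), and your closing remarks on the proxy classifier and the isotropic specialization match the paper's \cref{lemma:bound_prob} and the final instantiation.
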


Next ,we show the lemma used in the proof.

\begin{lemma} \label{lem:app:1}
  There exists $t > 0$ such that $p_{\beta + \gamma}(z) \leq p_{\beta}(z) \cdot t$ for all $z \in A$. And further $p_{\beta + \gamma}(z) > p_{\beta}(z) \cdot t$ for all $z \in \mathbb{R}^{d} \setminus A$.
\end{lemma}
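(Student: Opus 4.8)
The plan is to exploit that $p_\beta$ and $p_{\beta+\gamma}$ are the densities of $\mathcal{N}(0,\Sigma)$ and $\mathcal{N}(\gamma,\Sigma)$, whose likelihood ratio is a strictly monotone function of exactly the linear functional that defines the half-space $A$. I would first compute, using that the Gaussian density $p_\beta$ is everywhere strictly positive,
\begin{equation*}
  \frac{p_{\beta+\gamma}(z)}{p_\beta(z)}
  = \frac{\exp\!\big(-\tfrac12 (z-\gamma)^T \Sigma^{-1} (z-\gamma)\big)}{\exp\!\big(-\tfrac12 z^T \Sigma^{-1} z\big)}
  = \exp\!\Big(\gamma^T \Sigma^{-1} z - \tfrac12 \gamma^T \Sigma^{-1}\gamma\Big).
\end{equation*}
This is a strictly increasing function of the scalar $\ell(z) := \gamma^T \Sigma^{-1} z$, and by construction $A = \{z : \ell(z) \le \tau\}$ is precisely the sublevel set of $\ell$ at the threshold $\tau := \sqrt{\gamma^T \Sigma^{-1}\gamma}\,\Phi^{-1}(\underline{p_{A}})$.

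Second, I would pick the constant $t$ so that the ratio equals $t$ exactly on the bounding hyperplane $\ell(z) = \tau$ of $A$. Choosing
\begin{equation*}
  t := \exp\!\Big(\sqrt{\gamma^T \Sigma^{-1}\gamma}\,\Phi^{-1}(\underline{p_{A}}) - \tfrac12 \gamma^T \Sigma^{-1}\gamma\Big) > 0,
\end{equation*}
the chain of equivalences $\tfrac{p_{\beta+\gamma}(z)}{p_\beta(z)} \le t \iff \ell(z) - \tfrac12\gamma^T\Sigma^{-1}\gamma \le \ln t \iff \ell(z) \le \tau \iff z \in A$ shows that the ratio is $\le t$ on $A$ and, because $\ell$ is strict on the complement, $> t$ on $\mathbb{R}^d \setminus A$. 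Multiplying both sides by $p_\beta(z) > 0$ converts these into $p_{\beta+\gamma}(z) \le t\, p_\beta(z)$ for $z \in A$ and $p_{\beta+\gamma}(z) > t\, p_\beta(z)$ for $z \in \mathbb{R}^d \setminus A$, which is exactly the assertion.

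This is essentially the Neyman--Pearson lemma specialised to two Gaussians with a common covariance, so there is no genuine obstacle; the only points needing a little care are (i) invoking strict positivity of $p_\beta$ to move freely between the ratio form and the product form of the inequality, and (ii) the degenerate case $\gamma = 0$, where $\sqrt{\gamma^T\Sigma^{-1}\gamma} = 0$, $t = 1$, $A = \mathbb{R}^d$, and the second assertion is vacuous. Everything else reduces to the algebraic simplification of the Gaussian ratio displayed above.
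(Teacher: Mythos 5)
Your proposal is correct and follows essentially the same route as the paper: compute the Gaussian likelihood ratio $\exp(\gamma^T\Sigma^{-1}z - \tfrac12\gamma^T\Sigma^{-1}\gamma)$, observe it is monotone in the linear functional defining the half-space $A$, and choose $t = \exp\big(\sqrt{\gamma^T\Sigma^{-1}\gamma}\,\Phi^{-1}(\underline{p_A}) - \tfrac12\gamma^T\Sigma^{-1}\gamma\big)$ so the threshold coincides with the boundary of $A$. Your explicit treatment of the complement inequality and of the degenerate case $\gamma = 0$ is a minor tidying of steps the paper dismisses as analogous, not a different argument.
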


\begin{proof}
  \begin{align*}
    \frac{p_{\beta + \gamma}(z)}{p_\beta(z)} 
    &= \exp \left( 
      - \tfrac{1}{2} (\vz - \gamma)^T \Sigma^{-1} (\vz - \gamma) + \tfrac{1}{2} \vz^T \Sigma^{-1} \vz 
    \right)
    \\
    &= \exp \left( 
    - \tfrac{1}{2} \vz^T \Sigma^{-1} \vz
    + \vz^T \Sigma^{-1} \gamma
    - \tfrac{1}{2} \gamma^T \Sigma^{-1} \gamma
    + \tfrac{1}{2} \vz^T \Sigma^{-1} \vz 
    \right)
    \\
    &= \exp \left( 
      \vz^T \Sigma^{-1} \gamma
      - \tfrac{1}{2} \gamma^T \Sigma^{-1} \gamma
      \right)
  \end{align*}
  
  What is the lowest $t$ if it exists such that $\frac{p_{\beta + \gamma}(z)}{p_\beta(z)} \leq t$?
  \begin{alignat*}{2}
    & \qquad \qquad \frac{p_{\beta + \gamma}(z)}{p_\beta(z)} 
    &&\leq t
    \\
    \Leftrightarrow \qquad
    & \exp \left(\vz^T \Sigma^{-1} \gamma - \tfrac{1}{2} \gamma^T \Sigma^{-1}
      \gamma \right) 
    &&\leq t
    \\
    \Leftrightarrow \qquad 
    & \quad \vz^T \Sigma^{-1} \gamma - \tfrac{1}{2} \gamma^T \Sigma^{-1} \gamma  
    &&\leq \log t
    \\
    \Leftrightarrow \qquad
    & \quad \quad \quad \vz^T \Sigma^{-1} \gamma 
    &&\leq \log t + \tfrac{1}{2} \gamma^T \Sigma^{-1} \gamma
    \\
  \end{alignat*}
  Because $z \in A$, we know that 
  \begin{equation*}
    \vz^T \Sigma^{-1} \gamma \leq \sqrt{\gamma^T \Sigma^{-1} \gamma} \Phi^{-1}(\underline{p_{A}}).
  \end{equation*}
  Does there exist a $t$ such that both upper bound coincide? Yes, namely
  \begin{equation*}
    t = \exp \left( 
      \sqrt{\gamma^T \Sigma^{-1} \gamma} \Phi^{-1}(\underline{p_{A}}) 
      - \tfrac{1}{2} \gamma^T \Sigma^{-1} \gamma  
    \right).
  \end{equation*}
  The case $p_{\beta + \gamma}(z) > p_{\beta}(z) \cdot t$ is analogous.
\end{proof}

\begin{lemma} \label{lemma:bound_prob}
  If we evaluate on a proxy classifier $f'$ instead of $f$, behaving with probability $(1 -
  \rho)$ the same as $f$ and with probability $\rho$ differently than $f$ and if
\begin{equation*}
  \prob_{\beta, f'}(f' \circ \psi_\beta(x) = c_{A})
  \geq
  \underline{p'_{A}}
  \geq \overline{p'_{B}}
  \geq
  \max_{c_{B} \neq c_{A}} \prob_{\beta, f'}(f' \circ \psi_\beta(x) = c_{B}),
\end{equation*}
then $g \circ \psi_\gamma(\vx) = c_A$ for all $\gamma$ satisfying
\begin{equation*}
  \|\gamma\|_2 < \frac{\sigma}{2}(\Phi^{-1}(\underline{p'_{A}} - \rho) -
\Phi^{-1}(\overline{p_{B}} + \rho)).
\end{equation*}

\begin{proof} 
  By applying the union bound we can relate the output probability $p$
  of $f$ for a class $c$ with the output probability of $f'$ and $p'$:
  \begin{align*}
    p' &:= \prob_{\beta, f'}(f' \circ \psi_\beta(x) = c)
    \\
    &= \prob_{\beta, f'}\left( (f \circ \psi_\beta(x) = c) \lor (f' \text{ error}) \right)
    \\
    &\leq \prob_\beta(f \circ \psi_\beta(x) = c) + \prob_{f'} (f' \text{ error})
    \\
    &= p + \rho
  \end{align*}
  Thus we can obtain new bounds
  $\underline{p_{A}} \geq \underline{p'_{A}} - \rho$ and
  $\overline{p_{B}} \leq \overline{p'_{B}} + \rho$ from
  $\underline{p'_{A}}$ and $\overline{p'_{B}}$ measured on
  $f'$. Plugging these bounds in \cref{thm:bound} yields the
  result.
\end{proof}

\end{lemma}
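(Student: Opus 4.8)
The plan is to reduce the claim to the already-established certification result by translating the measurements taken on the proxy classifier $f'$ into valid probability bounds for the true classifier $f$, whose smoothed version is $g$. First I would fix an arbitrary class $c$ and relate $p := \prob_\beta(f \circ \psi_\beta(\vx) = c)$ to $p' := \prob_{\beta, f'}(f' \circ \psi_\beta(\vx) = c)$, where the latter probability is over $\beta \sim \mathcal{N}(0,\sigma^2\mathbb{1})$ together with the internal randomness of $f'$. Since $\{f' \circ \psi_\beta(\vx) = c\} \subseteq \{f \circ \psi_\beta(\vx) = c\} \cup \{f' \text{ errs}\}$ and the error event has probability at most $\rho$, a union bound gives $p' \le p + \rho$; applying the same reasoning to the complement event $\{f' \circ \psi_\beta(\vx) \ne c\}$ gives $p \le p' + \rho$ as well.

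Next I would substitute these two inequalities into the hypotheses of the lemma. The quantity $\underline{p'_{A}}$ lower-bounds $\prob_{\beta,f'}(f' \circ \psi_\beta(\vx) = c_{A})$, so by the first inequality $\underline{p_{A}} := \underline{p'_{A}} - \rho$ is a lower bound on $\prob_\beta(f \circ \psi_\beta(\vx) = c_{A})$; similarly, since $\overline{p'_{B}}$ upper-bounds $\max_{c_{B} \ne c_{A}}\prob_{\beta,f'}(f' \circ \psi_\beta(\vx) = c_{B})$, the complement-event inequality makes $\overline{p_{B}} := \overline{p'_{B}} + \rho$ an upper bound on $\max_{c_{B} \ne c_{A}}\prob_\beta(f \circ \psi_\beta(\vx) = c_{B})$. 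Note that $\underline{p_{A}} \ge \overline{p_{B}}$ is precisely what is needed for the certified radius below to be non-negative, and is implicit in the statement.

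Finally I would invoke \cref{thm:aux} with $\Sigma = \sigma^2 \mathbb{1}$ (equivalently, the non-proxy part of \cref{thm:bound}), noting that then $\sqrt{\gamma^T \Sigma^{-1} \gamma} = \|\gamma\|_2 / \sigma$, applied to the pair $(\underline{p_{A}}, \overline{p_{B}})$. This yields $g \circ \psi_\gamma(\vx) = c_{A}$ for every $\gamma$ with $\|\gamma\|_2 < \tfrac{\sigma}{2}\bigl(\Phi^{-1}(\underline{p_{A}}) - \Phi^{-1}(\overline{p_{B}})\bigr) = \tfrac{\sigma}{2}\bigl(\Phi^{-1}(\underline{p'_{A}} - \rho) - \Phi^{-1}(\overline{p'_{B}} + \rho)\bigr)$, which is exactly the assertion.

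The step I expect to require the most care is the first one: one must be explicit that the ``$f'$ errs'' event is independent of (or at least uniformly bounded over) both $\vx$ and $\beta$, so that the $\rho$-slack is the same for every class and every realization of $\beta$, and one must keep in mind that $g$ is built from $f$ rather than from $f'$ --- closing exactly this gap between the estimable object ($f'$) and the certified object ($g$, built from $f$) is the whole content of the lemma. Everything downstream of the union bound is a routine substitution into \cref{thm:aux}.
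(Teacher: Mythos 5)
Your proposal is correct and follows essentially the same route as the paper: a union bound converts the probabilities measured on $f'$ into bounds for $f$ shifted by $\rho$ (namely $\underline{p_{A}} \geq \underline{p'_{A}} - \rho$ and $\overline{p_{B}} \leq \overline{p'_{B}} + \rho$), which are then plugged into the non-proxy part of \cref{thm:bound}. Your explicit treatment of both directions of the inequality (via the complement event) is in fact slightly more careful than the paper's proof, which only writes out $p' \leq p + \rho$ and leaves the reverse direction implicit.
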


We now show \cref{thm:bound} (restarted below): Setting
$\Sigma = \sigma^{2} \mathbb{1} $ in \cref{thm:aux} directly recovers
\cref{thm:bound} up to the last sentence, which in turn is a direct
consequence of \cref{lemma:bound_prob}.

\begin{theorem*}[\cref{thm:bound} restated]
  Let $\vx \in \mathbb{R}^m$, $f: \mathbb{R}^m \to \mathcal{Y}$ be a classifier
  and $\psi_\beta: \mathbb{R}^m \to \mathbb{R}^m$ be a composable transformation
  as above. If
  \begin{equation*}
    \prob_\beta(f \circ \psi_\beta(\vx) = c_{A})
    \geq
    \underline{p_{A}}
    \geq
    \overline{p_{B}}
    \geq
    \max_{c_{B} \neq c_{A}} \prob_\beta(f \circ \psi_\beta(\vx) = c_{B}),
  \end{equation*}
  then $g \circ \psi_\gamma(\vx) = c_A$ for all $\gamma$ satisfying
  $ \|\gamma\|_2 \leq \tfrac{\sigma}{2}(\Phi^{-1}(\underline{p_{A}}) -
  \Phi^{-1}(\overline{p_{B}})) =: r_\gamma.  $ Further, if $g$ is
  evaluated on a proxy classifier $f'$ that behaves like $f$ with
  probability $1-\rho$ and else returns an arbitrary answer, then
  $r_\gamma := \tfrac{\sigma}{2}(\Phi^{-1}(\underline{p_{A}}-\rho) -
  \Phi^{-1}(\overline{p_{B}}+\rho))$.
\end{theorem*}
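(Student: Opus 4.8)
The plan is to obtain \cref{thm:bound} as the $\Sigma=\sigma^2\mathbb{1}$ specialisation of a more general guarantee in which the parameter noise has an arbitrary symmetric positive-definite covariance $\Sigma$, and then to handle the proxy-classifier clause by a union bound. The crucial observation driving everything is that composability, $\psi_{\beta+\gamma}=\psi_\beta\circ\psi_\gamma$, turns the event $\{f\circ\psi_{\beta+\gamma}(\vx)=c\}$ into the event $\{f\circ\psi_{\bullet}(\vx)=c\}$ \emph{evaluated at the shifted parameter} $\beta+\gamma$. Hence the probabilistic structure is exactly the one behind \cref{thm:original}, except the Gaussian shift now lives in the $d$-dimensional parameter space rather than in the input space, with decision region $R_c:=\{z\in\mathbb{R}^d : f\circ\psi_z(\vx)=c\}$. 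By the definition of $g$ it suffices to show $\prob_\beta(f\circ\psi_{\beta+\gamma}(\vx)=c_A) > \prob_\beta(f\circ\psi_{\beta+\gamma}(\vx)=c_B)$ whenever $\gamma$ satisfies the claimed bound.

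The heart of the argument is a Neyman--Pearson / likelihood-ratio step. I would introduce the halfspaces $A:=\{z : \gamma^T\Sigma^{-1}z \le \sqrt{\gamma^T\Sigma^{-1}\gamma}\,\Phi^{-1}(\underline{p_A})\}$ and, symmetrically, $B:=\{z : \gamma^T\Sigma^{-1}z \ge \sqrt{\gamma^T\Sigma^{-1}\gamma}\,\Phi^{-1}(1-\overline{p_B})\}$. Since $\gamma^T\Sigma^{-1}\beta \sim \mathcal{N}(0,\gamma^T\Sigma^{-1}\gamma)$ when $\beta\sim\mathcal{N}(0,\Sigma)$, a one-dimensional Gaussian computation gives $\prob(\beta\in A)=\underline{p_A}$ and $\prob(\beta\in B)=\overline{p_B}$. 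The Gaussian likelihood ratio $p_{\beta+\gamma}(z)/p_\beta(z)=\exp(z^T\Sigma^{-1}\gamma - \tfrac12\gamma^T\Sigma^{-1}\gamma)$ is monotone in the very linear functional $z^T\Sigma^{-1}\gamma$ that defines $A$, so there is a threshold $t>0$ with $p_{\beta+\gamma}\le t\,p_\beta$ on $A$ and $p_{\beta+\gamma}>t\,p_\beta$ off $A$ (I would isolate this as a small lemma). Combining the hypothesis $\prob(R_{c_A})\ge\underline{p_A}=\prob(\beta\in A)$ with this threshold property and rearranging the integrals over $A$ and $\mathbb{R}^d\setminus A$ yields $\prob(f\circ\psi_{\beta+\gamma}(\vx)=c_A)\ge\prob(\beta+\gamma\in A)$; the mirror argument on the $B$ side gives $\prob(f\circ\psi_{\beta+\gamma}(\vx)=c_B)\le\prob(\beta+\gamma\in B)$. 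A second one-dimensional Gaussian computation then evaluates $\prob(\beta+\gamma\in A)=\Phi(\Phi^{-1}(\underline{p_A})-\sqrt{\gamma^T\Sigma^{-1}\gamma})$ and $\prob(\beta+\gamma\in B)=\Phi(\sqrt{\gamma^T\Sigma^{-1}\gamma}-\Phi^{-1}(1-\overline{p_B}))$, and requiring the former to exceed the latter is, after applying the increasing map $\Phi^{-1}$, exactly $\sqrt{\gamma^T\Sigma^{-1}\gamma} < \tfrac12(\Phi^{-1}(\underline{p_A})-\Phi^{-1}(\overline{p_B}))$.

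To finish: specialising $\Sigma=\sigma^2\mathbb{1}$ makes $\sqrt{\gamma^T\Sigma^{-1}\gamma}=\|\gamma\|_2/\sigma$, so the threshold becomes $\|\gamma\|_2\le\tfrac{\sigma}{2}(\Phi^{-1}(\underline{p_A})-\Phi^{-1}(\overline{p_B}))=r_\gamma$ (the strict-versus-non-strict inequality being absorbed by the usual limiting argument on the boundary). For the last sentence, replace $f$ by the proxy $f'$ and apply a union bound: $\prob_{\beta,f'}(f'\circ\psi_\beta(\vx)=c)\le\prob_\beta(f\circ\psi_\beta(\vx)=c)+\rho$, so measuring $\underline{p'_A},\overline{p'_B}$ on $f'$ yields valid bounds $\underline{p_A}\ge\underline{p'_A}-\rho$ and $\overline{p_B}\le\overline{p'_B}+\rho$ for $f$; substituting into the radius formula gives $r_\gamma=\tfrac{\sigma}{2}(\Phi^{-1}(\underline{p_A}-\rho)-\Phi^{-1}(\overline{p_B}+\rho))$.

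\textbf{Main obstacle.} The delicate part is the Neyman--Pearson step: confirming that the threshold $t$ exists and that the halfspace $A$ coincides (up to its measure-zero boundary) with $\{p_{\beta+\gamma}/p_\beta\le t\}$, so that $A$ is genuinely the worst-case decision region, and keeping every inequality direction straight while accounting for the adversarial/arbitrary behaviour of the classifier on the boundary. The rest is one-dimensional Gaussian bookkeeping and the elementary union bound.
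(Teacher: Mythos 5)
Your proposal matches the paper's own proof essentially step for step: the same generalization to an arbitrary positive-definite covariance $\Sigma$, the same halfspaces $A$ and $B$ with the one-dimensional Gaussian computations, the same likelihood-ratio threshold lemma (the paper isolates it exactly as you suggest), the same evaluation of $\prob(\beta+\gamma\in A)$ and $\prob(\beta+\gamma\in B)$ leading to the radius, the specialization $\Sigma=\sigma^2\mathbb{1}$, and the same union-bound lemma for the proxy classifier. It is correct and takes essentially the same route as the paper.
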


	\section{Inverse and Refinement}\label{sec:refimenet} %

\subsection{Details for Step 2}

In this section, we elaborate on the details of Step 2 in \cref{subsec:inverse}.
We consider the intersection of $c_{i',j'}$ with the $(i,j)$-interpolation
region, $[x_l,x_u] \times [y_l,y_u] := c_{i',j'} \cap [i,i+2] \times [j,j+2]$.
This yields, 
\begin{align*}
    p'_{i',j'} 
    \in 
    I([x_l,x_u],[y_l,y_u]) 
    = \;
    & p_{i,j}\tfrac{2+i-[x_l,x_u]}{2}\tfrac{2+j-[y_l,y_u]}{2}
    + p_{i,j+2}\tfrac{2+i-[x_l,x_u]}{2}\tfrac{[y_l,y_u]-j}{2}
    \\
    &+ p_{i+2,j}\tfrac{[x_l,x_u]-i}{2}\tfrac{2+j-[y_l,y_u]}{2}
    + p_{i+2,j+2}\tfrac{[x_l,x_u]-i}{2}\tfrac{[y_l,y_u]-j}{2}.
\end{align*}
Next, we solve for the pixel value $p_{i,j}$ to get the constraint $q_{i,j}$:
\begin{align*}
    q_{i,j} =
        \Big(
            p'_{i',j'} &- 
            p_{i,j+2}\tfrac{2+i-[x_l,x_u]}{2}\tfrac{[y_l,y_u]-j}{2}
            - p_{i+2,j}\tfrac{[x_l,x_u]-i}{2}\tfrac{2+j-[y_l,y_u]}{2}
            \\
            &- p_{i+2,j+2}\tfrac{[x_l,x_u]-i}{2}\tfrac{[y_l,y_u]-j}{2}
        \Big) \Big( 
            \tfrac{2+i-[x_l,x_u]}{2}\tfrac{2+j-[y_l,y_u]}{2} 
        \Big)^{-1}
\end{align*}
Because we don't have any constraints for the pixel values $p_{i+2,j},
p_{i,j+2}$ and $p_{i+2,j+2}$, we replace their values by the $[0,1]$
constraint and obtain: 
\begin{align*}
    q_{i,j} =
        \Big(
            p'_{i',j'} &- 
            \Big(\tfrac{2+i-[x_l,x_u]}{2}\tfrac{[y_l,y_u]-j}{2}
            - \tfrac{[x_l,x_u]-i}{2}\tfrac{2+j-[y_l,y_u]}{2}
            \\
            &- \tfrac{[x_l,x_u]-i}{2}\tfrac{[y_l,y_u]-j}{2} \Big) [0,1]
        \Big) \Big( 
            \tfrac{2+i-[x_l,x_u]}{2}\tfrac{2+j-[y_l,y_u]}{2} 
        \Big)^{-1}
\end{align*}
Instead of using standard interval analysis to compute the constraints for
$p_{i,j}$, we use the following more efficient transformer: We
replace $[x_l,x_u]$ and $[y_l,y_u]$ with the coordinate
$(x,y) \in [x_l,x_u] \times [y_l,y_u]$ furthest away from
$(i,j)$, which is in our case $(x_u, y_u)$ to obtain
\begin{align*}
    q_{i,j}
    &= 
        \Big(
            p'_{i',j'} - 
            \Big(\tfrac{2+i-x_u}{2}\tfrac{y_u-j}{2}
            + \tfrac{x_u-i}{2}\tfrac{2+j-y_u}{2}
            + \tfrac{x_u-i}{2}\tfrac{y_u-j}{2} \Big) [0,1]
        \Big) \Big( 
            \tfrac{2+i-x_u}{2}\tfrac{2+j-y_u}{2} 
        \Big)^{-1}
    \\
    &=
    \left[
        p'_{i',j'} - \left(\tfrac{2+i-x_u}{2}\tfrac{y_u-j}{2}
        + \tfrac{x_u-i}{2}\tfrac{2+j-y_u}{2}
        + \tfrac{x_u-i}{2}\tfrac{y_u-j}{2}\right) 
        , 
        p'_{i',j'}
    \right]
    \left(\tfrac{2+i-x_u}{2}\tfrac{2+j-y_u}{2}\right)^{-1}
    .
\end{align*}

\subsection{Algorithm}
Here, we present the algorithm used to compute the inverse of a transformation.
For the construction of the set $C$, we iterate only over the index set $P$. The
set $P$ is constructed do include all points in $G$ that could yield non empty
intersections $c_{i',j'}$, thus this is just to speed up the evaluation and
equivalent otherwise to the algorithm described in the main part.
\begin{algorithm}[H]
    \SetAlgoLined
    \LinesNumbered
    \DontPrintSemicolon
    \setstretch{1.35}
  \KwData{Image $\vx' \in \mathbb{R}^{m \times m}$, transform $T$, parameter range $B$, coordinates $i,j$}
  \KwResult{Range for the pixel value $p_{i,j}$.}

    $N \gets \begin{psmallmatrix}[i-2, i+2] \\ [j-2, j+2] \end{psmallmatrix}$
  
    $\begin{psmallmatrix} [i'_l, i'_u] \\ [j'_l, j'_u] \end{psmallmatrix} 
      \gets 
      T_B(N)  $

    $P \gets \left\{
      \begin{psmallmatrix} i' \\ j' \end{psmallmatrix}
      \Big|
      \begin{smallmatrix} 
        i' \in \text{range}(\lfloor i'_l \rfloor, \dots, \lceil i'_u \rceil,2) \\
        j' \in \text{range}(\lfloor j'_l \rfloor, \dots, \lceil j'_u \rceil,2)
      \end{smallmatrix}
    \right\}$ \;
  
    $C \gets \left\{ c_{i',j'} 
      := 
      T_B^{-1} \begin{psmallmatrix} i' \\ j' \end{psmallmatrix}
      \cap 
      N
      \Big| \, c_{i',j'} \neq \emptyset, 
      (i',j') \in P
      \right\}$

    $p_{i,j} \gets [0,1] \cap 
        \left( 
            \bigcap\limits_{c_{i',j'} \in C} 
                q_{i-2,j-2}(c_{i', j'})
                \cup q_{i,j-2}(c_{i', j'})
                \cup q_{i-2,j}(c_{i', j'})
                \cup q_{i,j}(c_{i', j'})
        \right)
    $
  
   \caption{Procedure to calculate the range for the pixel values of the inverse image.} \label{algorithm:optimized}
  \end{algorithm}

\subsection{Experimental Evaluation}\label{sec:exper-eval} %

To investigate the impact of refinement on the downstream error
estimate we used $20$ MNIST images, rotated each with $3$ random
angles and then proceeded to calculate the inverse. In the calculation,
we considered the range $\Gamma_{\pm} = 10$. We see that a low number of
refinements have a large impact on the error but the returns become
quickly diminishing. The impact on the run time of a single additional
refinement step is negligible.

\begin{figure*}[h]
  \centering
    \begin{subfigure}[t]{.45\textwidth}
        \centering
        \includegraphics[width=0.7\textwidth]{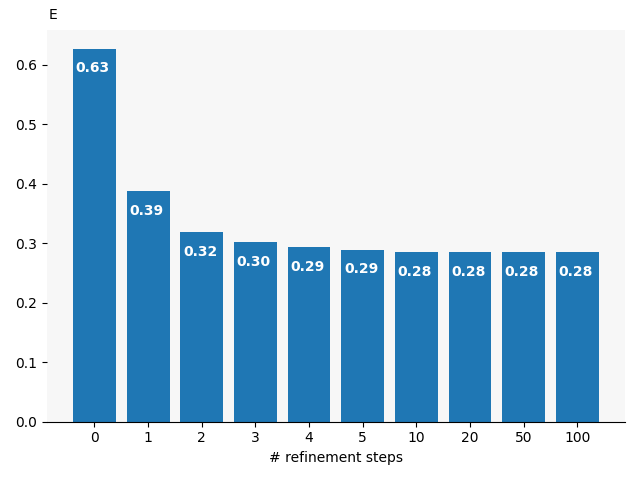}
    \end{subfigure}  
    \begin{subfigure}[t]{.45\textwidth}
      \centering
      \includegraphics[width=0.7\textwidth]{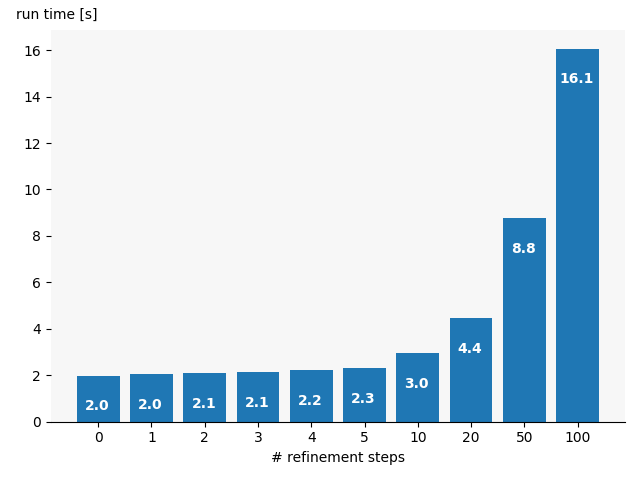}
    \end{subfigure}    
    \caption{Interpolation and rounding error $E$ as well as run time
      for different numbers of refinement steps.}
  \label{fig:refinement}
\end{figure*}

  \section{Inverse for Rich Images}\label{sec:inverse-rich-images} %
\methodIndividual performs poorly on large images, such as those from
ImageNet as the inverse computation outlined in \cref{subsec:inverse}
produces a too large over-approximation of $\vx$ leading to $E$
estimates of around $40$, while manageable value would be $\leq 2$.

\cref{fig:InverseIN} shows the computed inverse for such images. We
observe a pattern of artifacts in the inverse, where the pixel value
can not be narrowed down sufficiently resulting in the large estimate
of $E$.  The result of the refined inverse is perfectly recognizable
to a human observer (or a neural network), highlighting the promise of
the algorithm for future applications.

\begin{figure*}%
    \centering
    \begin{subfigure}[t]{.15\textwidth}
        \centering
        \includegraphics[width=0.7\textwidth]{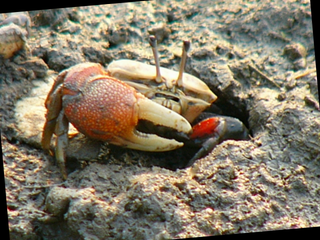}
    \end{subfigure}
    \begin{subfigure}[t]{.30\textwidth}
        \centering
        \includegraphics[width=0.35\textwidth]{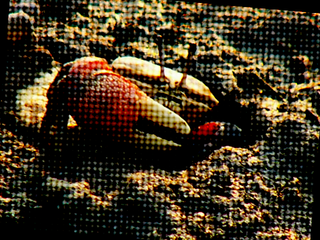}
        \includegraphics[width=0.35\textwidth]{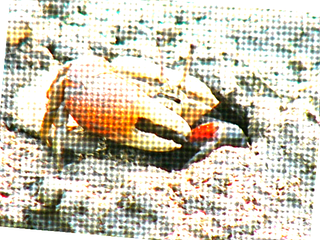}
    \end{subfigure}
    \begin{subfigure}[t]{.30\textwidth}
        \centering
        \includegraphics[width=0.35\textwidth]{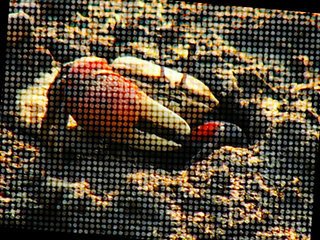}
        \includegraphics[width=0.35\textwidth]{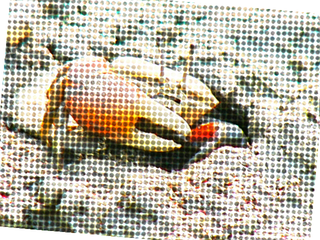}
    \end{subfigure}
    \begin{subfigure}[t]{.15\textwidth}
        \centering
        \includegraphics[width=0.7\textwidth]{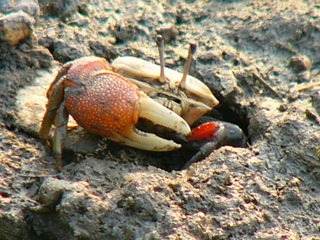}
      \end{subfigure}\\
    \begin{subfigure}[t]{.15\textwidth}
        \centering
        \includegraphics[width=0.7\textwidth]{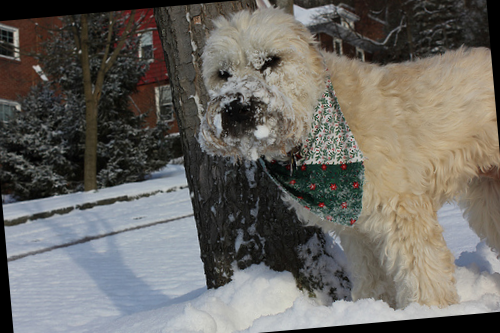}
    \end{subfigure}
    \begin{subfigure}[t]{.30\textwidth}
        \centering
        \includegraphics[width=0.35\textwidth]{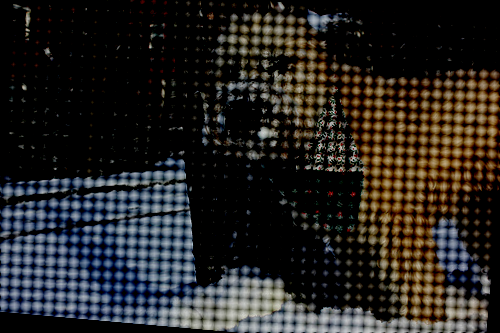}
        \includegraphics[width=0.35\textwidth]{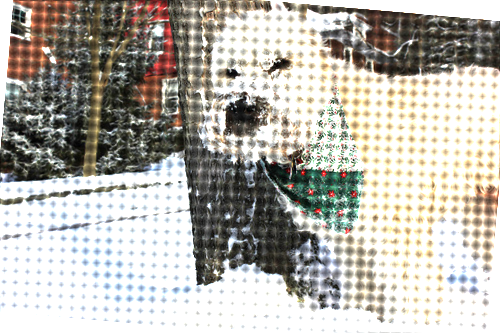}
    \end{subfigure}
    \begin{subfigure}[t]{.30\textwidth}
        \centering
        \includegraphics[width=0.35\textwidth]{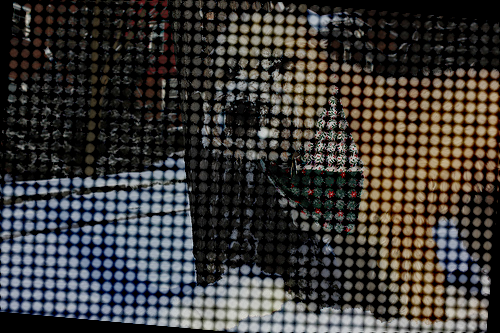}
        \includegraphics[width=0.35\textwidth]{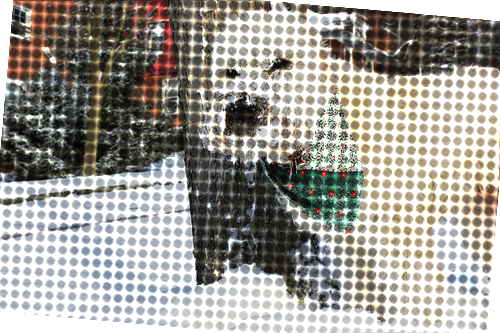}
    \end{subfigure}
    \begin{subfigure}[t]{.15\textwidth}
        \centering
        \includegraphics[width=0.7\textwidth]{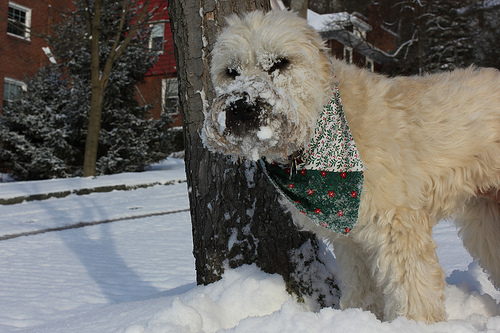}
      \end{subfigure}\\
    \begin{subfigure}[t]{.15\textwidth}
        \centering
        \includegraphics[width=0.7\textwidth]{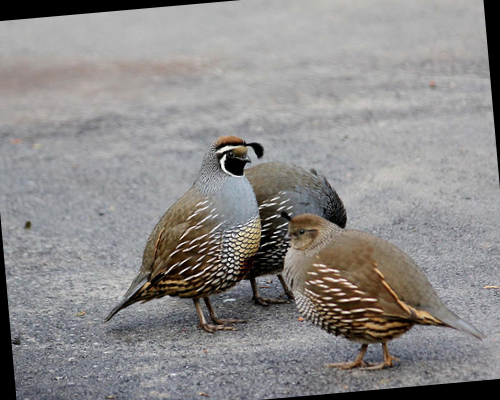}
    \end{subfigure}
    \begin{subfigure}[t]{.30\textwidth}
        \centering
        \includegraphics[width=0.35\textwidth]{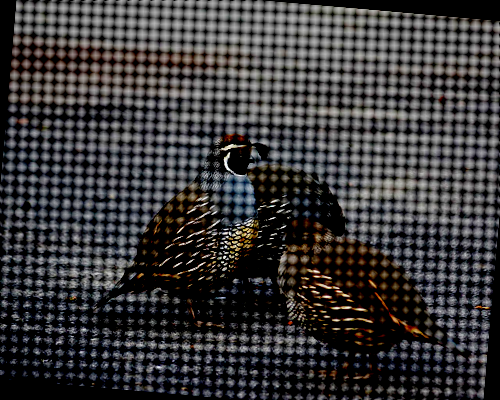}
        \includegraphics[width=0.35\textwidth]{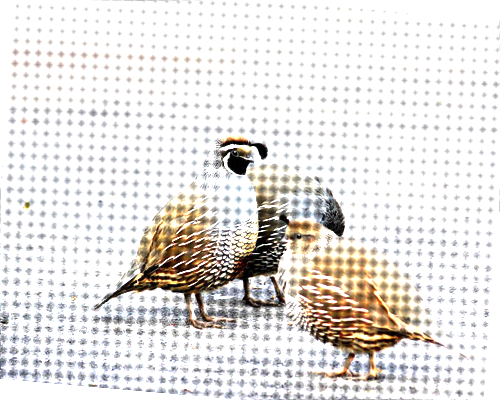}
    \end{subfigure}
    \begin{subfigure}[t]{.30\textwidth}
        \centering
        \includegraphics[width=0.35\textwidth]{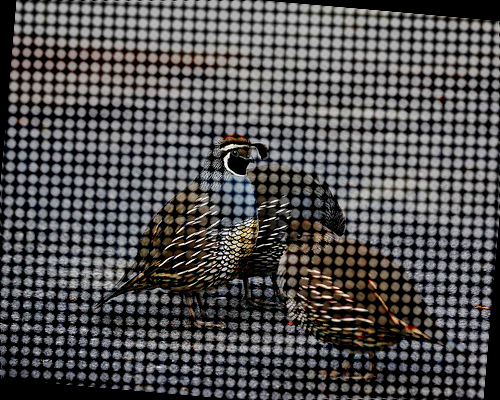}
        \includegraphics[width=0.35\textwidth]{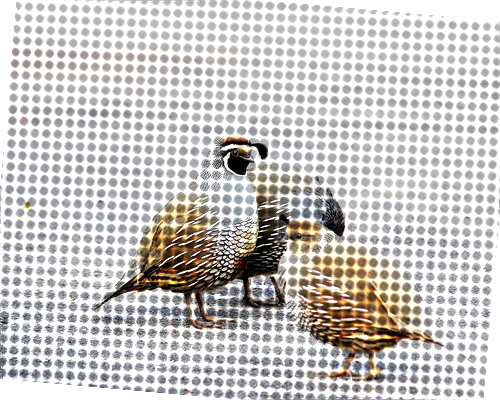}
    \end{subfigure}
    \begin{subfigure}[t]{.15\textwidth}
        \centering
        \includegraphics[width=0.7\textwidth]{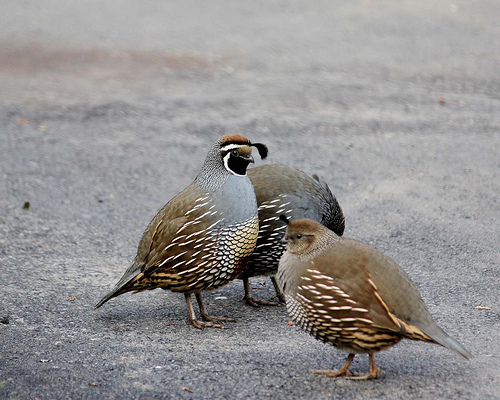}
      \end{subfigure}\\
    \begin{subfigure}[t]{.15\textwidth}
        \centering
        \subcaption{Rotated}
        \label{fig:InverseIN_Calc:rotated}
    \end{subfigure}
    \begin{subfigure}[t]{.30\textwidth}
        \centering
        \subcaption{Inverse}
        \label{fig:InverseIN_Calc:inverse}
    \end{subfigure}
    \begin{subfigure}[t]{.30\textwidth}
        \centering
        \subcaption{$10\times$ refined inverse}
        \label{fig:InverseIN_Calc:refined}
    \end{subfigure}
    \begin{subfigure}[t]{.15\textwidth}
        \centering
        \subcaption{Original}
        \label{fig:InverseIN_Calc:original}
    \end{subfigure}\\          
    
    \caption{Computation of the inverse, analogous to \cref{fig:Inverse_Calc}, for images from ImageNet \citep{ImageNet}.} 
    \label{fig:InverseIN}
\end{figure*}

	\section{Experiment Details}\label{sec:experiment-details} %

\subsection{Details for \cref{sec:methodheuristic}}\label{sec:details-methodheuristic} %
To evaluate \methodHeuristic we use the following classifiers. Note that
\cref{tab:addit-heuristicResults} in \cref{sec:addit-methodheuristic}
contains results for further datasets:

\begin{description}
\item[MNIST \cite{MNIST}] We trained a convolutional network
  consisting of $\text{\textsc{Conv2d}}(k, n)$, with $k \times k$
  filter size, $n$ filter channels and stride 1, batch norm
  $\text{\textsc{BN}}$ \cite{batchNorm}, maximum pooling $\text{\textsc{MaxPool}(k)}$ on $k \times k$ grid,
  $\text{\textsc{Dropout}}(p)$ \cite{dropout}
  with probability $p$ and linear layers $\text{\textsc{Lin}}(a, b)$
  from $\mathbb{R}^{a}$ to $\mathbb{R}^{b}$.

  \begin{align*}
    &\text{\textsc{Conv2d}}(5, 32), \text{\textsc{ReLU}}, \text{\textsc{BN}}\\
    &\text{\textsc{Conv2d}}(5, 32), \text{\textsc{ReLU}}, \text{\textsc{MaxPool}(2)}, \text{\textsc{Dropout}}(0.2)\\
    &\text{\textsc{Conv2d}}(3, 64), \text{\textsc{ReLU}}, \text{\textsc{BN}}\\
    &\text{\textsc{Conv2d}}(3, 64), \text{\textsc{ReLU}}, \text{\textsc{BN}}, \text{\textsc{MaxPool}(2)}, \text{\textsc{Dropout}}(0.2)\\
    &\text{\textsc{Conv2d}}(3, 128), \text{\textsc{ReLU}}, \text{\textsc{BN}}\\
    &\text{\textsc{Conv2d}}(1, 128), \text{\textsc{ReLU}}, \text{\textsc{BN}}, \text{\textsc{Flatten}}\\
    &\text{\textsc{Lin}}(128, 100), \text{\textsc{ReLU}}\\
    &\text{\textsc{Lin}}(100, 10)\\    
  \end{align*}

  We used data normalization for MNIST and trained for 180 epochs with
  SGD, starting from learning rate $0.01$, decreasing it by a factor
  of $10$ every $60$ epochs. No other pre-processing was used.

\item[FashionMNIST \cite{FMNIST}] We trained a ResNet-18 with data
  normalization. We trained for 180 epochs with SGD with an initial
  learning rate of $0.01$, lowering it by a factor of 10 every $60$
  epochs.
  
\item[CIFAR \cite{cifar}] We trained a ResNet-18 with data
  normalization. We trained for 90 epochs with SGD with an initial
  learning rate of $0.1$, lowering it by a factor of 10 every $30$
  epochs. We resized GTSRB images to $32 \times 32 \times 3$.
  
\item[ImageNet \cite{ImageNet}] We used the pre-trained ResNet50 from
  \texttt{torchvision}:
  \url{https://pytorch.org/docs/stable/torchvision/models.html}.
\end{description}

\subsection{Details for \cref{sec:methodglobal}}\label{sec:details-methodglobal} %

In \cref{sec:methodglobal} we use a ResNet-18 architecture for MNIST and a ResNet-110 for CIFAR-10 and, as in, \cref{sec:details-methodheuristic}, ResNet-50 for (R)ImageNet.
We trained them to be
robust to image transformations (rotation, translation) as well as
$\ell^{2}$ noise.

To train networks that perform well when randomized smoothing is applied, we
utilize the training procedure
\textsc{SmoothAdv}\textsubscript{\textsc{PGD}} as outlined in
\citet{Salman}.  For each batch of samples we apply a randomized data
augmentation, vignetting, and Gaussian blur. After
this prepossessing we then apply
\textsc{SmoothAdv}\textsubscript{\textsc{PGD}} (noise restricted to the vingetted area) then evaluate or train on the batch.

The intuition behind the Gaussian blur is that many artifacts, such as
the interpolation error are have high frequencies. The blur acts as a
low-pass filter and discards high frequency noise. This does not
strongly impact the classification accuracy, but drastically reduces
the error estimate and therefore the amount of noise that needs to be
added for robust classification. The filter is parameterized by $\sigma_{b}$ and the filter size $s_{b}$. Formally the filter is a convolution with a filter matrix $F \in \mathbb{R}^{s_b \times s_b}$. Each entry in $F$ is filled with values of a two dimensional Gaussian distribution centered at the center of the matrix and evaluated at the center of the entry. Afterwards the matrix is normalized such that $\sum_{i, j}, F_{i, j} = 1$.

In the error estimation and inference we use the same prepossessing as during training.

\paragraph{MNIST} %
For MNIST we use a ResNet-18 (that takes a single color channel in the
input layer), which we trained with PGD step size $0.2$,
batch size $1024$, and initial learning rate $0.01$ over 180 epochs,
lowering the learning rate every 60 epochs.
For \methodGlobalD we use $\sigma=0.22$ and data augmentation with rotations in $[-90, 90]$ degrees for the rotation model and $\sigma = 0.3$ and random translations of $\pm 50\%$ for the translation model.
For the Gaussian blur we use $\sigma_{b} = 2.0$ with filter size $s_{b} = 5$ on all models.

For \methodGlobalx we use a model trained with  $\sigma=0.15$ for rotations and but the same translation model.

\paragraph{CIFAR-10} %
For \methodGlobalD we train a ResNet-110 with batch size 256, $\sigma=0.25$
and random rotations in $[-60, 60]$ as well as
\textsc{SmoothAdv}\textsubscript{\textsc{PGD}} with $m=1$ samples, $t=8$ steps and warmup of 10 epochs for a perturbation size of $0.5$.
We train over 150 epochs and lower the learning rate every 50 steps.
For the Gaussian blur we use $\sigma_{b} = 1.0$ and $s_{b} = 5$.

For \methodGlobalx we use $\sigma=0.12$, perturbation size of $0.25$, $m=8$ and $t=1$ and keep other parameters the same.
Both variants take about 17 minutes per epoch on a single GeForce RTX 2080 Tis.

\paragraph{(Restricted) ImageNet} %
We trained with a batch size of 400 for 90 epochs using stochastic gradient decent with a learning rate
starting at 0.1, which is decreased by a factor 10 every 30 epochs.
On both datasets , we used $\sigma = 0.5$
and PGD step size $1.0$, as well as $\sigma_{b} = 2.0$ and $s_{b} = 5$.
For Restricted ImageNet we train with
random rotation in $[-60, 60]$ and for ImageNet in $[-30, 30]$.

Training 1 epoch of ImageNet with 6 GeForce RTX 2080 Tis and a 16-core node of
aw Intel(R) Xeon(R) Gold 6242 CPU @ 2.80GHz takes roughly 2.5 hours
and roughly 30 minutes for Restricted ImageNet.
For the accuracy of $b$ in \cref{tab:rotResults}, we evaluate four settings --- with vingetting, with Gaussian blur, with both and with neither --- and report the highest.
\cref{tab:base} shows a comparison across all settings.

\begin{table}[ht]
  \centering
  \caption{Base models evaluated on the whole data set either with Gaussian blur (G), Vingetting (V), both or neither. }
  \label{tab:base}
  \begin{tabular}{@{}lrrrrr@{}}
    \toprule
    Model & $T^{I}$ & Standard & +G & +V  & +G+V\\
    \midrule
    MNIST, \methodGlobalD & $R^{I}$  & 0.98 & 0.98 & 0.98 & 0.98 \\
    MNIST, \methodGlobalx & $R^{I}$  & 0.98 & 0.98 & 0.98 & 0.98 \\
    MNIST & $\Delta^{I}$ & 0.94 &  0.92 & 0.94  & 0.93\\
    CIFAR-10 \methodGlobalD & $R^{I}$ & 0.34 & 0.36 & 0.56  & 0.56\\
    CIFAR-10 \methodGlobalx & $R^{I}$ & 0.75 & 0.76 & 0.70 & 0.70 \\
    RImageNet & $R^{I}$ & 0.77  & 0.77 & 0.78  & 0.77 \\
    ImageNet & $R^{I}$ & 0.38 & 0.32  & 0.38 & 0.32 \\
    \bottomrule
  \end{tabular}
\end{table}

To sample $E$ for (R)ImageNet we use a server with an AMD EPYC 7601 processor with 128 threads.

\cref{tab:rotResultsOld} shows a version \cref{tab:rotResults} in the layout of the version of this paper published at NeurIPS'20.

\begin{table}[h]
  \centering
\begin{small}
  \caption{Evaluation of \methodGlobal for $T^{I} := R^{I}$. $\epsilon_{\max}$ is computed on the training set. We show the test set accuracy of $b$, certified accuracy of $g$ and distribution of the obtained certification radius $r_{\gamma}$, along with the average run time $t$ and the number of used samples $n_{\gamma}, n_{\delta}$. $^{\#}$ denotes values obtained by sampling. Each certificate hold with overall confidence $0.99$.}
    \label{tab:rotResultsOld} %

    \begin{tabular}{@{}lrrrrrrrrrrr@{}} \toprule
    & &                   &          \multicolumn{2}{c}{Acc.}    &       \multicolumn{3}{c}{$r_{\gamma}$ percentile} &  \\
  \cmidrule(lr){4-5} \cmidrule(lr){6-8}
Dataset & \multicolumn{1}{c}{$\epsilon_{\max}$} & \multicolumn{1}{c}{$E$} & \multicolumn{1}{c}{$b$} & \multicolumn{1}{c}{$g$} & \multicolumn{1}{c}{\nth{25}} & \multicolumn{1}{c}{\nth{50}} & \multicolumn{1}{c}{\nth{75}} & \multicolumn{1}{c}{t [s]} & \multicolumn{1}{c}{$n_{\gamma}$} & \multicolumn{1}{c}{$n_{\delta}$} \\
\cmidrule(lr){1-11}
MNIST & 0.36  & 0.45 & 0.98 & 0.89 & 52.95 & 57.22 & 57.22 & 21.56 & 200 & 10000\\
CIFAR-10 & 0.51  & 0.55 & 0.56 & 0.31 & 24.80 &  $30.00^{\dagger}$ & $30.00^{\dagger}$ & 89.75 & 50 & 15000\\
CIFAR-10 & 0.51  & 0.55 & 0.56 & 0.32 &  $30.00^{\dagger}$ &  $30.00^{\dagger}$ & $30.00^{\dagger}$ & 351.47 & 200 & 15000\\
RImageNet & 0.91  & 1.20 & 0.78 & 0.74 & $30.00^{\dagger}$ & $30.00^{\dagger}$ & $30.00^{\dagger}$  & 100.73 & 50 & 2500\\
RImageNet & 0.91  & 1.35 & 0.78 & 0.64 & $30.00^{\dagger}$ & $30.00^{\dagger}$ & $30.00^{\dagger}$  & 100.13 & 50 & 2500\\
ImageNet & 0.91  & 0.95 &  0.38 & 0.30 & 14.51 & 24.34 &   $30.00^{\dagger}$& 100.21 & 50 & 2500\\
ImageNet & 0.91  & 1.20 &  0.38 & 0.23 & 12.38 & 21.47 &   $30.00^{\dagger}$& 100.73 & 50 & 2500\\
ImageNet & 0.91  & 1.35 &  0.38 & 0.16 & 10.46 & 21.47 &   $30.00^{\dagger}$& 100.44 & 50 & 2500\\
\bottomrule
\end{tabular}

\end{small}
\end{table}

\subsection{Details for \cref{sec:methodindividual}}\label{sec:details-methodindividual} %

For rotation we use the $\sigma=0.22$ model as in \cref{sec:details-methodglobal} and for translation also the same model.

	\section{Additional Experiments}\label{sec:addit-exper} %

\subsection{Additional Results for \cref{sec:methodheuristic}}\label{sec:addit-methodheuristic} %
\begin{table*}[h!]
  \centering
\begin{small}    
  \caption{ Extended version of \cref{tab:heuristicResults}.
    Evaluation of \methodHeuristic on 1000 images. The attacker used
     \texttt{worst-of-100}. We use $n_{\gamma} = 1000, \sigma_\gamma =
     \Gamma_{\pm}$.} %
    \label{tab:addit-heuristicResults} %

    \begin{tabular}{@{}llccccc@{}} \toprule
      & & & Acc. &  \multicolumn{2}{c}{adv. Acc.}  & \\
      \cmidrule(lr){4-4} \cmidrule(lr){5-6} 
      Dataset   & $T^{I}$  & $\Gamma_{\pm}$ & $b$ & $b$ & $g$ & t [s]\\
      \cmidrule(lr){1-7}

      MNIST     & $R^{I}$ & $30^{\circ}$ & 0.99 & 0.73 & 0.99 & 0.97\\
      FMNIST & $R^{I}$ & $30^{\circ}$ & 0.91 & 0.13 & 0.87 & 7.98\\
      CIFAR-10     & $R^{I}$ & $30^{\circ}$ & 0.91 & 0.26 & 0.85 & 0.95\\
      GTSRB & $R^{I}$ & $30^{\circ}$ & 0.91 & 0.30 & 0.88 & 8.00\\
      ImageNet  & $R^{I}$ & $30^{\circ}$ & 0.76 & 0.56 & 0.76 & 5.43\\
      \cmidrule(lr){1-7}  
      MNIST     & $\Delta^{I}$ & $4$ & 0.99 & 0.03 & 0.53 & 0.86\\
      FMNIST & $\Delta^{I}$ & $4$ & 0.91 & 0.10 & 0.50 & 6.12 \\
      CIFAR-10     & $\Delta^{I}$ & $4$  & 0.91 & 0.44 & 0.79 & 0.95\\
      GTSRB & $\Delta^{I}$ & $4$ & 0.91  & 0.30 & 0.63 & 5.17 \\
      ImageNet  & $\Delta^{I}$ & $20$ & 0.76 & 0.65 & 0.75 & 6.70\\
      \bottomrule
    \end{tabular}

\end{small}   
\end{table*}

\cref{tab:addit-heuristicResults} is an extended version of
\cref{tab:heuristicResults} and provides results for additional
datasets.

\begin{table*}[ht!]
  \centering
\begin{small}    
  \caption{We first use \methodHeuristic to obtain the certification
    radius $r_{\gamma}$ on 30 images and subsequently sample from the
    parameter space indicated by $\Gamma_{\pm} = r_{\gamma}$ and
    checked whether the certificate holds for them. We use 30 samples
    and $n_{\gamma} = 2000$ samples for the smoothed classifier. The
    last column shows the number of images for which we found
    violations. }
    \label{tab:heuristicViolations} %

    \begin{tabular}{@{}llccc@{}} \toprule
      Dataset   & $T^I$ & $\Gamma_\pm$ & median $r_{\gamma}$ & $r_{\gamma}$ violated\\
      \midrule
      MNIST     & $R^{I}$ & $30^{\circ}$ & 28.34 & 0\\
      FMNIST     & $R^{I}$ & $30^{\circ}$ & 13.45 & 1\\
      CIFAR-10     & $R^{I}$ & $30^{\circ}$ & 19.16 & 14\\
      GTSRB     & $R^{I}$ & $30^{\circ}$ & 20.93 & 0\\
      ImageNet     & $R^{I}$ & $10^{\circ}$ & 27.13 & 1\\

      \midrule

      MNIST     & $\Delta^{I}$ & $4$ & 1.12 & 0\\
      FMNIST     & $\Delta^{I}$ & $4$ & 1.78 & 1\\
      CIFAR-10    & $\Delta^{I}$ & $4$ & 4.76 & 14\\
      GTSRB     & $\Delta^{I}$ & $4$ & 2.58 & 0\\
      ImageNet     & $\Delta^{I}$ & $20$ & 16.43 & 0\\
      \bottomrule
    \end{tabular}
\end{small}   
\end{table*}

\begin{table*}[ht!]
  \centering
\begin{small}    
  \caption{Same setup as in \cref{tab:heuristicViolations}, but with circular vignetting.}
    \label{tab:heuristicViolationsVingetteInterpolation} %

    \begin{tabular}{@{}llcccc@{}} \toprule
      Dataset   & $T^I$ & $\Gamma_\pm$ & median $r_{\gamma}$ & $r_{\gamma}$ violated & $r_{\gamma}$ violated, no interpolation\\
      \midrule
      MNIST     & $R^{I}$ & $30^{\circ}$ & 28.34  & 0 & 0\\
      FMNIST     & $R^{I}$ & $30^{\circ}$ &  17.07 & 0 & 0 \\
      CIFAR-10     & $R^{I}$ & $30^{\circ}$ & 11.49  & 10 & 0\\
      GTSRB     & $R^{I}$ & $30^{\circ}$ & 25.28 & 0 & 0\\

      \bottomrule
    \end{tabular}
\end{small}   
\end{table*}

\subsection{``Certification Radius'' of \methodHeuristic}\label{sec:methodheuristic-certificate} %
As \methodHeuristic uses \cref{thm:bound} to justify the heuristic,
this also makes it tempting to use the bound $r_{\gamma}$ provided by
it. However, as the assumptions of \cref{thm:bound} are violated it
does not formally present a certification radius. Here we investigate
if and how much it holds nevertheless.  To do this we construct a
smoothed classifier $g$ from an undefended base classifier $b$ and
calculated the certification radius $r_{\gamma}$. Subsequently, we
sampled 100 new rotated images in the parameter space induced by
$\Gamma_{\pm} = r_{\gamma}$ and evaluated on them. The results are shown in
\cref{tab:heuristicViolations}.  While generally robust, the radius
does not constitute a certificate, as we can clearly find violations.

In the context of rotation $R^{I}$ we add circular vignetting (as we
do for \methodGlobal and \methodIndividual) to make the behavior
closer to a composing transformation.  For this experiment, we retrained
the same networks, but applied the vignette during training.  Results
are shown in \cref{tab:heuristicViolationsVingetteInterpolation} where
we can see that this already decreases the number of violations for
CIFAR-10 and FMNIST.  In a final step we assume knowledge of the attacker
parameter $\gamma$ and replace $R^{I}_{\beta} \circ R^{I}_{\gamma}$
(for the same images) with $R^{I}_{\beta+\gamma} $ in the evaluation
of the classifier, in which case \cref{thm:bound} should hold and
indeed we don't observe any more violations.

\subsection{Additional Results for \cref{sec:methodglobal}}\label{sec:addit-methodglobal} %

\paragraph{Beyond Bilinear Interpolation}
\methodHeuristic and \methodGlobal can directly be applied to image
transformations using other interpolation schemes without any
adaption. \methodIndividual, however, requires the adaption of the
inverse algorithm. While this is generally possible, we consider it
beyond the scope of this work.

We guess (based on $1000$ samples) and verify $E$ using $1000$ samples for $\vx$ and $8000$ for $\beta$.
We summarize these results in \cref{tab:E_bicubic}.
On datasets other than MNIST we observe $E$ larger than possible. At manageable levels, the $q_{E}$ becomes too low for practical purposes.

On MNIST with the same settings as for \methodGlobalD we certify 90 out of 100 images at $r_{\gamma}=30$ (for bilinear interploation with $E=0.45$ 91 can be certified).

\begin{table*}[ht!]
  \centering
  \caption{$E$ and $q_{E}$ for bicubic interpolation.}
  \label{tab:E_bicubic}
  \begin{tabular}[t]{@{}lrr@{}}
    \toprule
    Dataset & $E$ & $q_{E}$\\
    \midrule
    MNIST & 0.5 & 0.99 \\
    CIFAR-10 & 1.10 & 0.99 \\
    CIFAR-10 & 0.55 & 0.27 \\
    ImageNet & 2.50 & 0.99 \\
    ImageNet & 1.20 & 0.28 \\
    \bottomrule
  \end{tabular}
\end{table*}

\subsection{Audio Volume Change}\label{sec:audio-volume-change} %
To show that our method can be used beyond image transformation we
showcase an adaption to audio volume changes.
The volume of an audio signal can be changed by multiplying the signal with a
constant. In order to change the signal $\vx$ by $\beta$ (measured in decibel
$[\beta] = \text{dB}$) we multiply $\vx$ by $10^{\beta/20}$. Thus the
transformation is $\psi_\beta(\vx) := 10^{\beta/20} \cdot \vx$, which composes:
\begin{equation*}
  \psi_\beta \circ \psi_\gamma(\vx)
  = 10^{(\beta + \gamma)/20} \cdot \vx
  = \psi_{\beta + \gamma}(\vx).
\end{equation*}

In practice such signals are stored in final precision, e.g. 16-bit, thus
potentially introducing rounding errors, with an $\ell^2$-norm bound by $E$. If
this is ignored \methodHeuristic can be applied to obtain guarantees. Otherwise,
\methodGlobal and \methodIndividual can be used to obtain sound bounds.

To evaluate this we use the speech commands dataset \cite{gcommands}, consisting
of 30 different commands, spoken by people, which are to be classified. The
length of the recordings are one second each. 
We use a classification pipeline that converts audio wave forms into
MFCC spectra \cite{MFCC} and then treats these as images and applies
normal image classification.  We use a ResNet-50, that was trained with
Gaussian noise, but not
\textsc{SmoothAdv}\textsubscript{\textsc{PGD}}.  We apply the noise
before the waveform is converted to the MFCC spectrum.

For \methodGlobal we estimate $E$ to be $0.005$ with the parameters
$\rho_{E} = 0.05, \sigma_{\gamma} = 3$ and $\Gamma = 3$ (for which $q_{E} \approx 0.75$). On 100
samples, the base classifier $f$ was correct $93$ times.
At $r_{gamma}$ of 1, 2, 3 and 4 the certified accuracy was 0.92, 0.89, 0.83 and 0.69 respectively.
This corresponds to $\pm 1.12$, $\pm 1.26$, $\pm 1.41$ and $\pm 1.58$ dB.
At $n_{\gamma} = 150$ and $n_{\epsilon} = 400$ the average certification
time was $26.80$ s. We use $\alpha_{\gamma} = 0.004, \alpha_{\epsilon} = \tfrac{0.005}{n_{\gamma}}$, assuming (but not computing) $\alpha_{E} = 0.001$ here, for a total confidence of $0.99$ in each certificate.

To investigate \methodIndividual we use
$\sigma_{\gamma} = 0.85, \Gamma = 1.05$. For 92 out of $100$ perturbed
audio signals to compute $\epsilon$.  We obtained
$\epsilon_{\max} \leq 0.0055$ and for 68 an $\epsilon \leq 0.005$,
which together with our results for \methodGlobal suggests the
applicability of the method.
For each signal we used $100$ samples for
$\beta$.  For cases with $\epsilon_{\max} > 0.0055$ we in fact observed
$\epsilon_{\max} \gg 0.0055$, as here many parts of the signal were amplified beyond
the precision of the 16-bit representation and clipped to $\pm
1$. This makes the information unrecoverable and sound error bound
estimates large.

  \begin{table*}[ht]
\begin{minipage}[t]{.45\textwidth}
  \centering
  \caption{Maximum observed errors and without gaussian blur (G) and without vignetting (V).}
  \label{tab:errablation}  
  \begin{tabular}{@{} l r r r r @{}}
    \toprule
    Dataset   & Both & -V & -G & -V-G \\
    \midrule
    MNIST     & \textbf{0.36}  &   \textbf{0.36}  &   2.47  &   2.51    \\
    CIFAR-10   & \textbf{0.51}  &   6.08  &   2.66  &   18.17   \\
    ImageNet  & \textbf{0.91}  &   70.66 &   9.25  &   75.69   \\
    \bottomrule
  \end{tabular}
\end{minipage}
\hfill
\begin{minipage}[t]{.45\textwidth}
  \caption{Correct classifications and by the model and verifications
    by DeepG \cite{DeepG}, with and without vignetting (V), out of 100
    images.}
  \label{tab:deepg}    
  \begin{tabular}{@{} l r r r @{}}  
    \toprule
    Model & Correct & \cite{DeepG}  & \cite{DeepG}+V    \\
    \midrule
    MNIST   & 98      &  86   & 87    \\
    CIFAR-10 & 74      &  65   & 32    \\
    CIFAR-10+V & 78      & 63    & 23  \\
    \bottomrule
  \end{tabular}
\end{minipage}
\end{table*}

\section{Further Comparison and Ablation}\label{sec:furth-comp-ablat} %
To show that the vignette and Gaussian blur are essential to our
algorithm we perform a small ablation study. \cref{tab:errablation}
shows the maximal error observed when sampling as in \methodGlobal.
We use the same setup as in \cref{sec:methodglobal}, but with 10000
samples for ImageNet.

Both, vignetting and Gaussian blur reduce the error bound
significantly for \methodGlobal and \methodIndividual. On CIFAR-10 and
ImageNet vignetting is very impactful because the corners of images
are rarely black in contrast to MNIST.  \citet{LinyiLiSmoothing} uses
vignetting for the same reason.  Without either of the methods
bounding the error would not be feasible.

For \methodIndividual vignetting is crucial, even for MNIST, as we can
make no assumptions for parts that are rotated into the image. Thus we
need to set these pixels to the full $[0,1]$ interval (see
\cref{fig:Inverse_Calc}). Without Gaussian blur the certification
rate drops to $0.11$.

Further, we extend this comparison to related work: We extended
\citet{DeepG} (Table~1 in their paper) to include vignetting. The
results are shown in \cref{tab:deepg}.  We also retrained their
CIFAR-10 model with vignetting (CIFAR-10+V) for completeness. While
vignetting on MNIST slightly helps (+1 image verified) on CIFAR-10 it
leads to a significant drop.  Including Gaussian blur into [11] would
require non-trivial adaption of the method. However, we implemented
this for interval analysis (on which their method is built) and found
no impact on results.

}{}

\message{^^JLASTPAGE \thepage^^J}

\end{document}